\newtheorem{proposition}{Proposition}
\newtheorem{lemma}{Lemma}
\begin{document}

\title{Language Knowledge-Assisted Representation Learning for Skeleton-Based Action Recognition}

\author{Haojun~Xu,
	Yan~Gao,
	Zheng~Hui,
	Jie~Li,
	and~Xinbo~Gao,~\IEEEmembership{Senior~Member,~IEEE}
\IEEEcompsocitemizethanks{\IEEEcompsocthanksitem Haojun Xu, Yan Gao, and Jie Li are with Xidian University. \protect\\
E-mail: \{haojunxu, yangao\}@stu.xidian.edu.cn; \protect\\ leejie@mail.xidian.edu.cn
\IEEEcompsocthanksitem Zheng Hui is with the Alibaba Group, Hangzhou 310000, China. \protect\\
E-mail: hz1406899875@gmail.com
\IEEEcompsocthanksitem Xinbo Gao is with Xidian University and with Chongqing University of Posts and Telecommunications. \protect\\
E-mail: xbgao@mail.xidian.edu.cn; gaoxb@cqupt.edu.cn \protect\\ (Corresponding author: Xinbo Gao.)}
}


\markboth{Journal of \LaTeX\ Class Files,~Vol.~14, No.~8, August~2015}%
{Shell \MakeLowercase{\textit{et al.}}: Bare Advanced Demo of IEEEtran.cls for IEEE Computer Society Journals}


\IEEEtitleabstractindextext{%
\begin{abstract}
\justifying
How humans understand and recognize the actions of others is a complex neuroscientific problem that involves a combination of cognitive mechanisms and neural networks. Research has shown that humans have brain areas that recognize actions that process top-down attentional information, such as the temporoparietal association area. Also, humans have brain regions dedicated to understanding the minds of others and analyzing their intentions, such as the medial prefrontal cortex of the temporal lobe. Skeleton-based action recognition creates mappings for the complex connections between the human skeleton movement patterns and behaviors. Although existing studies encoded meaningful node relationships and synthesized action representations for classification with good results, few of them considered incorporating a priori knowledge to aid potential representation learning for better performance. LA-GCN proposes a graph convolution network using large-scale language models (LLM) knowledge assistance. First, the LLM knowledge is mapped into a priori global relationship (GPR) topology and a priori category relationship (CPR) topology between nodes. The GPR guides the generation of new ``bone'' representations, aiming to emphasize essential node information from the data level. The CPR mapping simulates category prior knowledge in human brain regions, encoded by the PC-AC module and used to add additional supervision—forcing the model to learn class-distinguishable features. In addition, to improve information transfer efficiency in topology modeling, we propose multi-hop attention graph convolution. It aggregates each node's k-order neighbor simultaneously to speed up model convergence. LA-GCN reaches state-of-the-art on NTU RGB+D, NTU RGB+D 120, and NW-UCLA datasets.
\end{abstract}

\begin{IEEEkeywords}
Skeleton-based action recognition, GCN, a priori knowledge assistance, efficiency of information interaction.
\end{IEEEkeywords}}

\maketitle
\IEEEdisplaynontitleabstractindextext
\IEEEpeerreviewmaketitle


%
%
%
%

\section{Introduction}
\begin{figure}[h]
	\centering
	\includegraphics[width=\linewidth]{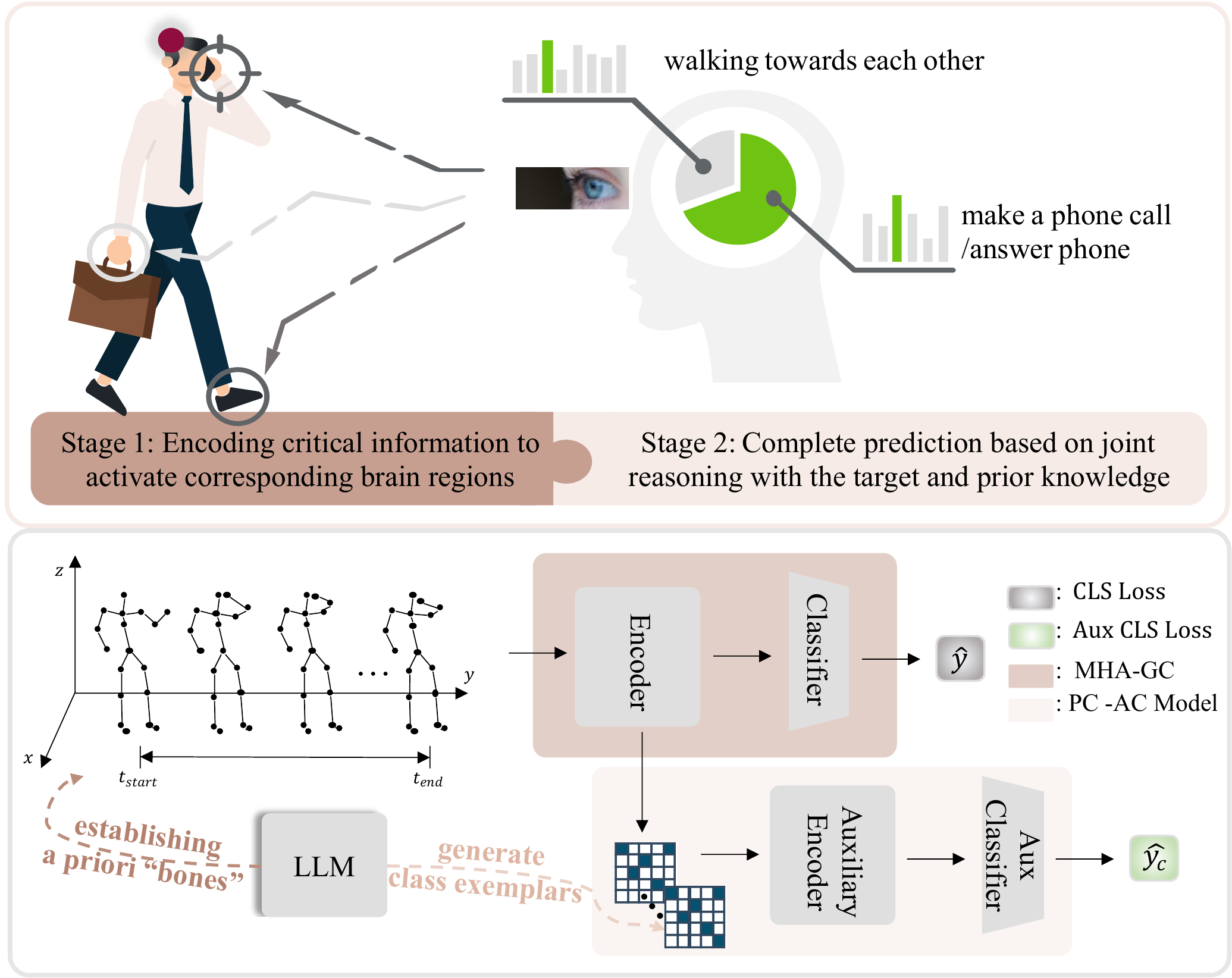}
	\caption{Schematic of LA-GCN concept. The top half of this figure shows two brain activity processes when humans perform action recognition. The bottom half shows the proposed multi-task learning process. The knowledge of the language model is divided into global information and category information to simulate the a priori knowledge used in human reasoning to aid the model. The encoder infers the correlation between joints and thus refines the topology using contextual information.}
	\label{fig:pipline}
\end{figure}
\IEEEPARstart{H}{uman} action recognition, one of the core tasks of video understanding, is a classification task that observes and infers agent behavior from a third-person perspective \cite{pami_SunKRBWL23_Review}. This task can be widely used in human-computer interaction \cite{LI_2019_Interaction}, video surveillance \cite{2021_Emergency}, healthcare \cite{pami_YuLZZC23_MMNet}, and short entertainment videos \cite{acm_18_recommender_system}. Among them, skeleton-based action recognition \cite{Chi_2022_InfoGCN,pami_SongZSW23_EfficientGCN,Chen_2021_CTRGCN,cvpr20_PengfeiZhang_SemanticsGuided,aaai_chen2021_multi,Liu_2020_MSG3D,Ye_2020_dynamicGCN,Shi_2019_2sAGCN,Thoker_2021_Contrastive3D,Su_2021_Uncertainty3D} is widespread because of its robustness to various environmental noises in the video and its high compactness that facilitates model focus. Graph convolutional network (GCN) based methods \cite{Yan_2018_STGCN,Shi_2019_2sAGCN} made revolutionary advances in this field.

The human musculoskeletal system allows body parts to move and thus perform different actions \cite{agur2009grant,nordin2001basic}. The skeleton-based data modality conforms to the human anatomy \cite{Yan_2018_STGCN}, thus making the learning of GCN more interpretable. It contains only 2D or 3D coordinates of the main joints of the human body, allowing the model to recognize human movements by reasoning about the skeleton sequences. In addition, the skeleton modal is more privacy-friendly than other modalities.

This paper introduces a language model knowledge-assisted graph convolutional network (LA-GCN) to enhance skeleton-based action recognition. Inspired by current cognitive neuroscience research \cite{finisguerra2019non,wurm2018role,NeuroSci_2018_Teory_of_Mind} and benefiting from the development of Large-scale Language Model (LLM) \cite{naacl_DevlinCLT19_BERT,nips_Brown20_GPT3,icml_Radford21_CLIP}, LA-GCN uses a large-scale textual knowledge base to simulate the brain regions that the human brain uses to accomplish behavioral prediction to help GCN network make predictions.

As shown in the upper part of Fig. \ref{fig:pipline}, when observing the behavior of others, the temporoparietal joint area in the brain is stimulated to activate the corresponding brain area associated with the current action, and the prediction of that action is accomplished by reasoning based on a priori knowledge and goals \cite{NeuroSci_2018_Teory_of_Mind}. Much research \cite{pami_Zhang19_VALSTM,pami_SongZSW23_EfficientGCN,Yan_2018_STGCN,Ye_2020_dynamicGCN,Chen_2021_CTRGCN,Chi_2022_InfoGCN,Shu_pami2023_MGAC,pami_WenGFZXL23_MotifGCNs} has tried to model essential joint information to enhance topology learning, but it still needs action-related a priori information to assist. Therefore, the proposed LA-GCN has two parts, the global prior relationship topology (GPR Graph) between nodes and the category prior relationship topology (CPR Graph) between nodes obtained from the LLM to assist the GCN in action reasoning. First, the GPR Graph is used to guide the generation of a new skeleton modal to assume the function of modeling critical information from the data level and using this modal as the input to the ``neuronal cluster'' GCN for feature aggregation. Then, the CPR Graph is used to compose an a priori consistency-assisted classification (PC-AC) module to help model learning based on features with enhanced semantic relationships. The framework of our proposed approach is shown at the bottom of Fig. \ref{fig:pipline} for the ``make a phone call'' action.

Our GPR-Graph in LA-GCN contains relatively well-established meaningful inter-node relationships, even if two nodes are spatially distant. Specifically, we use the text encoder of LLM to extract node features for each joint and establish correlations by finding class centers for all joints. We borrow the BERT \cite{naacl_DevlinCLT19_BERT} widely used for feature extraction in language text as our LLM model. The GPR-Graph generates new skeleton data for the GCN network by preserving the key ``bones'' according to the rules. The global prior connection of the new skeleton representation can reduce the difficulty of topology modeling and get the differentiated feature representation.

CPR Graph is a mapping of LLM's prior knowledge for action classes. Meanwhile, our PC-AC module aims to simulate how humans think and reason with a priori knowledge. Therefore, the PC-AC module encodes the CPR Graph as a category template topology to add additional supervision for GCN. It is suitable for solving some challenging process-like action classification problems, such as ``reading'' and ``writing'' which are similar in node relationships.

In addition, we propose a new feature aggregation method, a multi-hop attention graph convolution (MHA-GC) block, for improving the efficiency of message passing between nodes. When GC performs normal messaging, feature aggregation of a node is contributed by the directly connected nodes, which are called one-hop neighbors. However, as the depth of the GCN increases, the limitations of one-hop messaging increase layer by layer, leading to message delays and semantic over-smoothing that are detrimental to constructing inter-node contexts \cite{AAAI18_Li_DeeperInsights,aaai_Liu19_GeniePath,icml_Xhonneux20_Continuous}. For this reason, we use multi-hop attention in a single GC layer to establish remote node relationships. Specifically, we spread the computation of attention from each node to the nodes indirectly connected to it, and the attention is represented using the distance between node features. As a result, MHA-GC can strengthen the semantic relationships between nodes with connectivity and accelerate model convergence.

We evaluated three skeleton-based action recognition datasets NTU RGB+D 60 \cite{Shahroudy_2016_NTURGBD}, NTU RGB+D 120 \cite{Liu_2020_NTURGBD120}, and NW-UCLA\cite{Wang_2014_NWUCLA}. The performance on cross-subjects split for the first two benchmarks is 93.5\% and 90.7\%. On NW-UCLA, we have 97.6\% of top1 accuracy. The experiments show that the proposed LA-GCN outperforms the state-of-the-art techniques. Our main contributions are summarized as follows:
\begin{itemize}
	\item An LA-GCN is proposed to use the prior knowledge of LLM to assist the GCN for skeleton-based action recognition.
	\item A new skeleton representations method is proposed for GCN models ensemble. GPR Graph with global information is performed in this method to reduce the topological modeling difficulty.
	\item An auxiliary supervised module PC-AC with class information encoding is proposed to improve the recognition rate of similar actions.
	\item A new multi-hop attention feature aggregation method, MHA-GC, is proposed to improve the model's information transfer efficiency and accelerate model convergence.
\end{itemize}

\section{Related Work}
\subsection{Topology Construction}
Early skeleton-based action recognition methods contain CNN-based \cite{Kim_2017_TCN,Li_2018_CNN,Ke_2018_RotClip} and RNN-based \cite{Liu_2016_ST_LSTM,Lee_2017_TSLSTM,Si_2019_RNN,pami_Zhang19_VALSTM} methods. However, it is still necessary to explore the skeletal data structure adequately.
Topology modeling is the design focus of GCN-based methods \cite{pami_Zhang19_VALSTM,pami_SongZSW23_EfficientGCN,Yan_2018_STGCN,Ye_2020_dynamicGCN,Chen_2021_CTRGCN,Chi_2022_InfoGCN,pami_WenGFZXL23_MotifGCNs}. 
The pioneer of graph convolution, ST-GCN \cite{Yan_2018_STGCN}, predefines the topology based on the human structure as the input to GCNs. The topology is fixed in both the training and testing phases. Based on this, multi-scale graph building is introduced to GCNs for multi-level joint relationship modeling \cite{Liu_2020_MSG3D}. There are limitations in the inference performance of static methods due to their inability to follow the network co-optimization. Some works \cite{Shi_2019_2sAGCN,cvpr20_PengfeiZhang_SemanticsGuided,Chi_2022_InfoGCN} augment topology learning using self-attention mechanisms to model the correlation between two joints given the corresponding features. Topology learned using local embeddings still needs to satisfy the need of GCN for high-quality node relationship modeling. Dynamic GCN \cite{Ye_2020_dynamicGCN} uses contextual features of all joints learned together to obtain global correlations.
To make the features used for modeling more differentiated, CTR-GCN \cite{Chen_2021_CTRGCN} designs channel-specific topology maps to explore more possibilities for feature learning in different channels.
Shift-GCN \cite{Cheng_2020_ShiftGCN} removes the limitations of predefined topological graphs and uses shifting operators for inter-joint feature fusion, learning node relationships implicitly.
Since the action changes in real time, the dynamic approach is more advantageous and generalizes better than the static approach. In this paper, our LA-GCN topology modeling belongs to the dynamic mode.

\subsection{Language Model in Skeleton-Based Action Recognition}
The development of natural language processing (NLP) tasks gave birth to the pre-trained representation model BERT \cite{naacl_DevlinCLT19_BERT} from transformers with a bi-directional encoder. It is a solution to solve NLP tasks using pre-trained LLMs to finetune. However, this solution could be more efficient and can only be adapted to one task at a time. In order to use pre-trained LLMs more efficiently, prompt learning (PL) emerged as a technique capable of adapting different tasks to large models. Specifically, the PL technique adapts the model to a new downstream task by adding specific textual parameters to the input of LLMs based on the definition of new tasks, significantly increasing the efficiency of knowledge utilization in LLMs. Meanwhile, related approaches such as CLIP \cite{icml_Radford21_CLIP} and \cite{pmlr-v139-jia21b} have successfully applied PL to learning computer vision (CV) downstream tasks and demonstrated powerful graphical and textual representation capabilities. This brings light to the CV field to step into a new phase of visual text.

For the action recognition task, ActionCLIP \cite{wang2021actionclip} uses the CLIP training scheme for video action recognition and adds a transformer layer to guarantee the temporal modeling of video data. In the construction of PL templates, ActionCLIP directly uses class labels as input text to construct cues such as ``[action] of video,'' ``carry out [action] of person,'' and so on. 
LST \cite{xiang2022language} uses LLM for skeleton-text multimodal representation learning in the skeleton-based action recognition task. The PL technique in LST is mainly used as a skeleton text pair building, i.e., using the prompt to allow detailed descriptions generated by GPT3 \cite{nips_Brown20_GPT3} for each class of skeleton actions. 
Inspired by human cognitive reasoning processes in cognitive neuroscience, our approach uses LLM knowledge to model human knowledge of brain regions in action reasoning. PL is used to construct topological maps for assisted learning, which contain fine-grained semantic relationships between human actions and joints.

\section{Preliminaries}
The GCN family of methods \cite{Yan_2018_STGCN,Shi_2019_2sAGCN,Liu_2020_MSG3D,Ye_2020_dynamicGCN,Chen_2021_CTRGCN,Chi_2022_InfoGCN} models the human skeleton in space and time, respectively. Precisely, these methods follow the work of \cite{Yan_2018_STGCN} by first converting the skeleton data into a spatiotemporal graph $ G = (V, E) $, where $ V $ denotes joints, and $ E $ is edges, i.e., the physical connection between joints, represented by the graph's adjacency matrix $ A $. Spatial modeling aggregates node features based on the adjacency matrix and use them to represent the relationships between joints in the skeleton graph. Temporal modeling is then used to model the motion patterns of the joints.

If the skeleton-based action recognition task is symbolized can be expressed as follows:  $ \mathbb{R}^{N \times T \times  d} \to  \mathbb{R}^{p} $, where it has $ N $ joints and $ T $ frames, and the dimension of each joint coordinate is $ d $. The GCN finally outputs the probability value that the action belongs to each class, and there are $ p $ classes in total. For each frame, the graph convolution can be expressed as:
\begin{equation}
	\label{formula:gcn}
	F^{l+1} = \sum_{s \in S} \tilde{A_s} F^{l} W_s,
\end{equation}
where $F^{l} \in \mathbb{R}^{N \times C_1}$ is denoted as the input feature of channel $ C_1 $ and $F^{l+1} \in \mathbb{R}^{N \times C_2}$ is denoted as the output feature of channel $ C_2 $. $ S $ defines the neighborhood of each joint node, and $S=\{\text{root}, \text{centripetal}, \text{centrifugal}\}$, where the root is itself, the centripetal group is nodes near the skeleton center, and the centrifugal group is nodes far from the center. 
$\tilde{A_s} = \Lambda_s^{- \frac 1 2} A_s \Lambda_s^{- \frac 1 2}$ is normalized matrix $ A \in \{0, 1\}^{N \times N} $, and the diagonal matrix $\Lambda_s$ in $\tilde{A_s}$ is define as $\Lambda_s^{ii} = \sum_j(A_s^{ij}) + \alpha$ to prevent empty row, $ \alpha $ is a small positive number, say $0.001$. $W_s \in \mathbb{R}^{1 \times 1 \times C_1 \times C_2}$ is the weight of every $ s $. The graph convolution of most GCN-based methods on the $ T $ dimension acts as the rule 1D convolution.

\section{LLM Guided Topology Assistant Graph Convolution Network}
\label{Sec:method}
LA-GCN is a novel learning framework that utilizes LLM model knowledge for reasoning about the actions of a given skeleton sequence. We first integrate a priori knowledge into a multimodal skeleton representation (Sec. \ref{sec:GPR} and \ref{sec:multimodalrp}). For better feature aggregation of the prior representation, we also introduce a neural architecture (Sec. \ref{sec:encoding}) and a loss of category-based prior knowledge for multi-task learning (Sec. \ref{sec:pc-ac}). Finally, the overall learning scheme is given. \footnote{Note that all symbols used in this section are summarized in Table \ref{tab:notations} of the Appendix.}

\subsection{A Global Prior Relation Graph Generated by LLM}
\label{sec:GPR}
We will first extract text features for each class of action labels and all the joints using a large-scale pre-trained network Bert. The loss of BERT \cite{naacl_DevlinCLT19_BERT} during pre-training consists of completing and entering two sentences to predict whether the latter one is the following sentence of the previous one.
The output of the last layer of the model is used for the completion of the blank, and the output of the Pooler part is used for the next sentence prediction.
\begin{figure}[h]
	\centering
	\includegraphics[width=\linewidth]{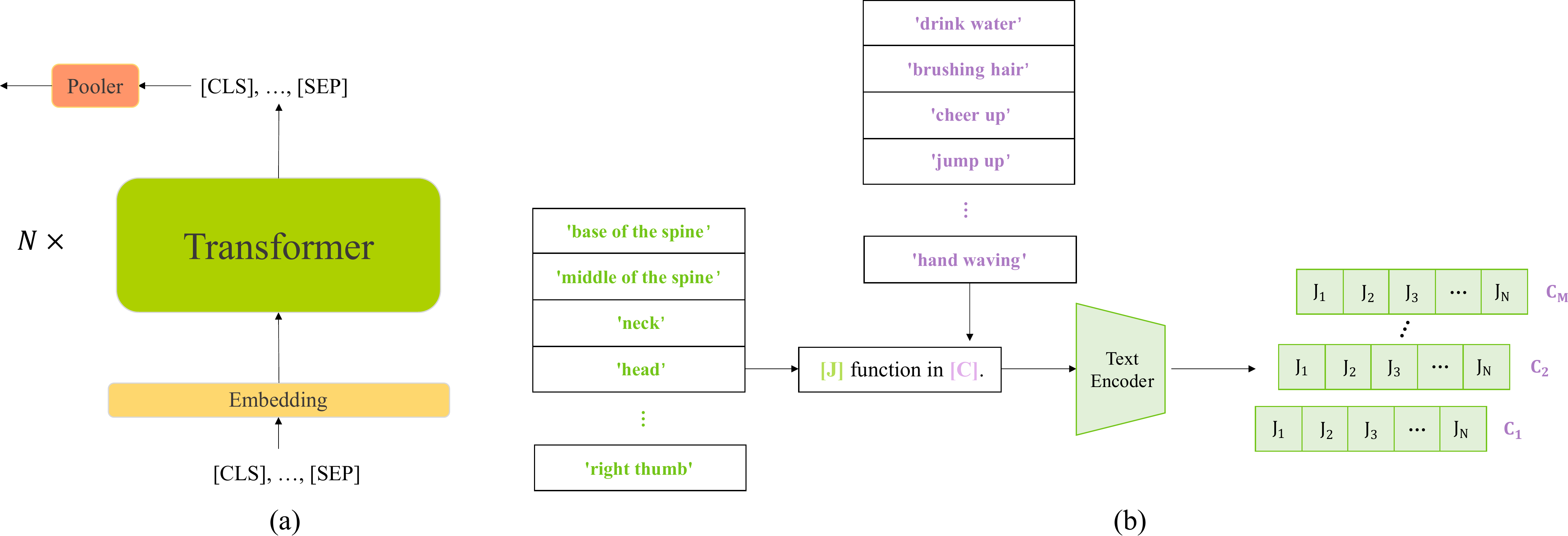}
	\caption{Extraction of text features. Subfigure (a) is Bert's architecture. (b) Our method uses the learned text encoder to extract text features by embedding the names of classes [C] and the names of all joints [J] of the target dataset.}
	\label{fig:txt_feature_extract}
\end{figure}

As shown in Fig. \ref{fig:txt_feature_extract}(a), Bert first tokenizes the input sentences and then adds [CLS] and [SEP] tokens at the end of the sentences. The tokens in the sentence are the indexes of the corresponding words in the vocab. After getting these indexes, we transform them into continuous vectors by the embedding layer. The embedding layer can be regarded as a lexicon-size vector library, and the embedding process indexes the vectors according to their indexes. The output of the last layer of the model can be obtained after N transformer layers: the features of each token and the features of [CLS] after Pooler. Since the features after Pooler contain the semantics of the whole sentence, they can directly use when doing tasks such as text classification in general. Therefore, in this paper, the features after Pooler are also selected as the features of our skeleton nodes. Given $ M $ action categories and $ N $ human nodes, our specific process of extracting features for each node is shown in Fig. \ref{fig:txt_feature_extract}(b). We feed the contained class name and joint name text, e.g., [joint] function in [class], into the text encoder of LLM to get the corresponding output features C $ \in \mathbb{R}^{N \times C} $ for all joint tokens of each action category, where $ C = 256 $ is the feature dimension of node feature J.
\begin{figure}[h]
	\centering
	\includegraphics[width=0.8\linewidth]{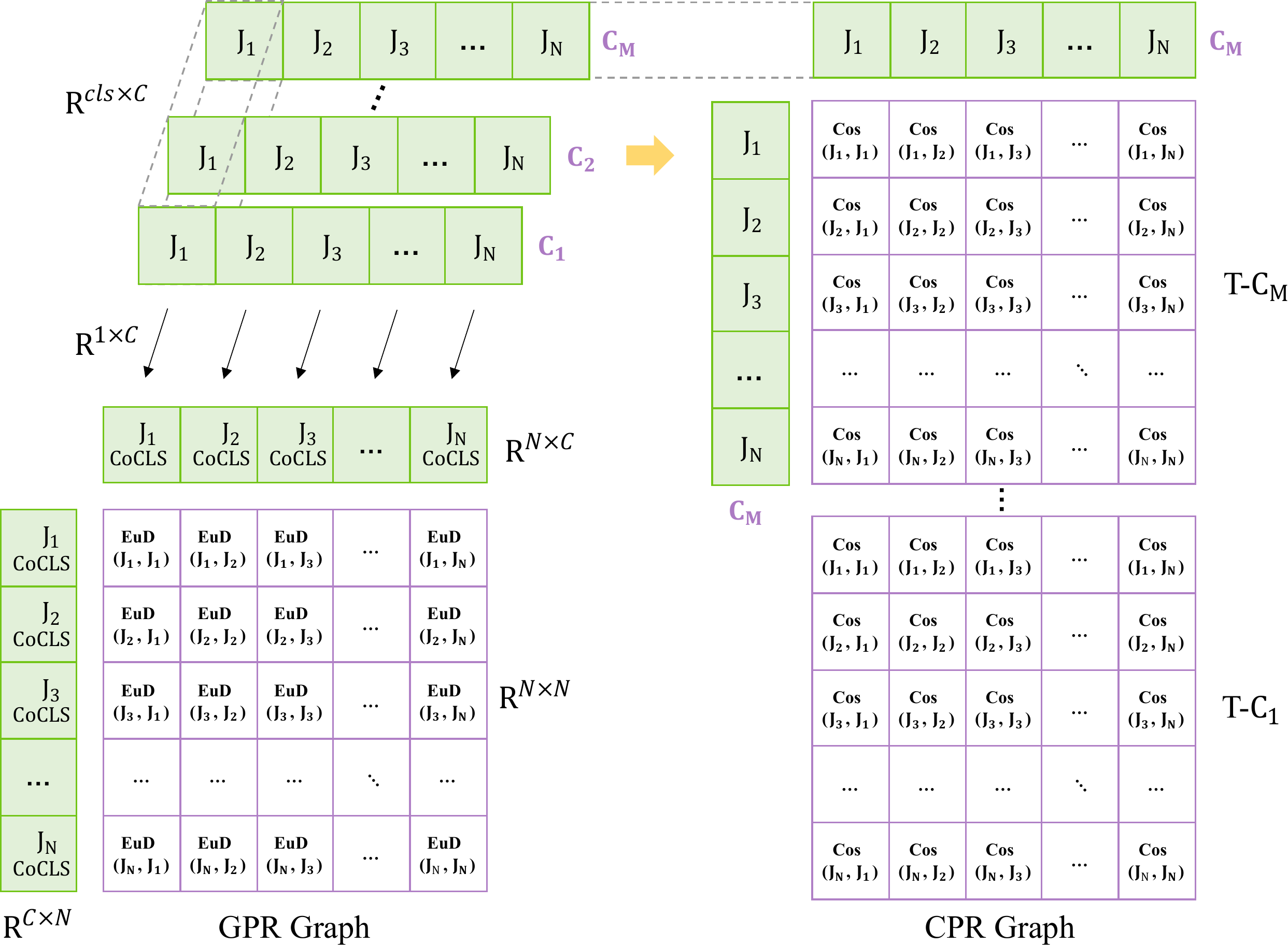}
	\caption{Summarize our approach to generate prior topologies. GPR Graph is obtained by computing the class centers of the joints and computing correlations between the node feature of each action to obtain the CPR Graph.}
	\label{fig:gen_graph}
\end{figure}

Then, we want to obtain a global inter-node relationship topology graph GPR-Graph containing the semantic knowledge in LLM to guide the skeleton representation generation. As shown in the left part of Fig. \ref{fig:gen_graph}, to obtain the GPR Graph, we first have to find the centroids (CoCLS) of the text features J $ \in \mathbb{R}^{1 \times C} $ of each node on the class dimension. Specifically, the features $ \text{J}_i \in \mathbb{R}^{cls \times C}, i \in (1, N) $, and $ cls = M $ of the same node on different categories are averaged to obtain the category centroid vector $ \text{J}^{ CoCLS }\in \mathbb{R}^{N \times C}$, and $\text{J}_{i}^{CoCLS} = \frac{1}{M} \sum_{j=1}^{M} \text{J}_{i}^{C_j} $. Then correlations are calculated between nodes, and the similarity measure chooses the Euclidean distance. The final GPR Graph $ \in \mathbb{R}^{N \times N} $ with global prior information is obtained, corresponding to the distance between the node class centers.

\subsection{A Priori Skeleton Modal Representation}
\label{sec:multimodalrp}
In this section, we introduce a form of skeleton representation generated using a GPR Graph called a priori skeleton modal representation. Consistent with previous work \cite{Shi_2019_2sAGCN,Liu_2020_MSG3D,Chen_2021_CTRGCN,Chi_2022_InfoGCN}, we train our model to complete inference using multiple modal representations. The prior representation primarily utilizes the relative positions of joints for complementary learning, i.e., representing the data as joints and bones. In this case, the bone feature is a transformation of the joint feature: it is obtained by subtracting the starting node from the ending node according to the physical connection \cite{Yan_2018_STGCN}. In detail, the joint-bone relationship at the moment $ t $ can be expressed as:
\begin{equation}
	\label{formula:joint_bone}
	\widetilde{X_t} = (I - B)X_t,
\end{equation}
where $ B \in \mathbb{R}^{N \times N} $ denotes the bone matrix containing the position relationship between the source and target joints, which is a binary matrix with $ B_{ij} = 1 $ if the $ i $-th joint is the source of the $ j $-th joint and otherwise $ B_{ij} = 0 $, where the row corresponding to the base point is a zero vector. This processing significantly improves the performance of action recognition, which means that bone representations with considerable differences from the joints can learn more differentiated features to complement. In addition, because of the high weight of physical bone topology in relationship modeling, reasonable skeleton construction is critical to learning inter-node relationships.

We consider the difference between nodes as ``bone'' data for additional representation. For instance, the NTU datasets \cite{Shahroudy_2016_NTURGBD,Liu_2020_NTURGBD120} include 25 nodes, and if we regard the difference vector between any two nodes as a ``bone'', there are $ C_{25}^2=300 $ ``bones.'' According to the nature of the bone matrix $ B $, if we want to construct a matrix that is different from the bone but has the same characteristics as the bone matrix, we need to transform the problem into an ``alignment problem'': Pick a ``bone'' connected to each joint except the base point that cannot be repeated. In other words, we will bind the degree of entry for each joint.

We extracted the ``bone'' data of all samples in the NTU+D 60\cite{Shahroudy_2016_NTURGBD} training set.\footnote{the average standard deviation of all 300 ``bones'' extracted from the training set is 0.1073.} 
After calculation, the bone links in the physical bone matrix \cite{Yan_2018_STGCN} have a small sum of standard deviations. Based on this, we define the ``bone'' selection function as $ g(\cdot) $, the new ``bones'' are represented as $\widetilde{B} = g_{min}(B_{std})$, where $ B_{std} $ is the sum of the standard deviations of the selected bones and $ g_{min} $ means that the links in the smallest $ B_{std} $ is chosen as the new ``bone'' representation. Specifically, given a bone vector $ \tilde{x_t} $ represented in Eq. \ref{formula:joint_bone}, its skeleton standard deviation (std) is the average of the std of three coordinates in the time dimension: $ b_{std} = mean(\sigma_x(\tilde{x_t}), \sigma_y(\tilde{x_t}), \sigma_z(\tilde{x_t})) $. We visualize the selected $ \widetilde{B} $ in Fig. \ref{fig:build_new_bone}, while other designs are given for comparison.
\begin{figure}[h]
	\centering
	\includegraphics[width=0.9\linewidth]{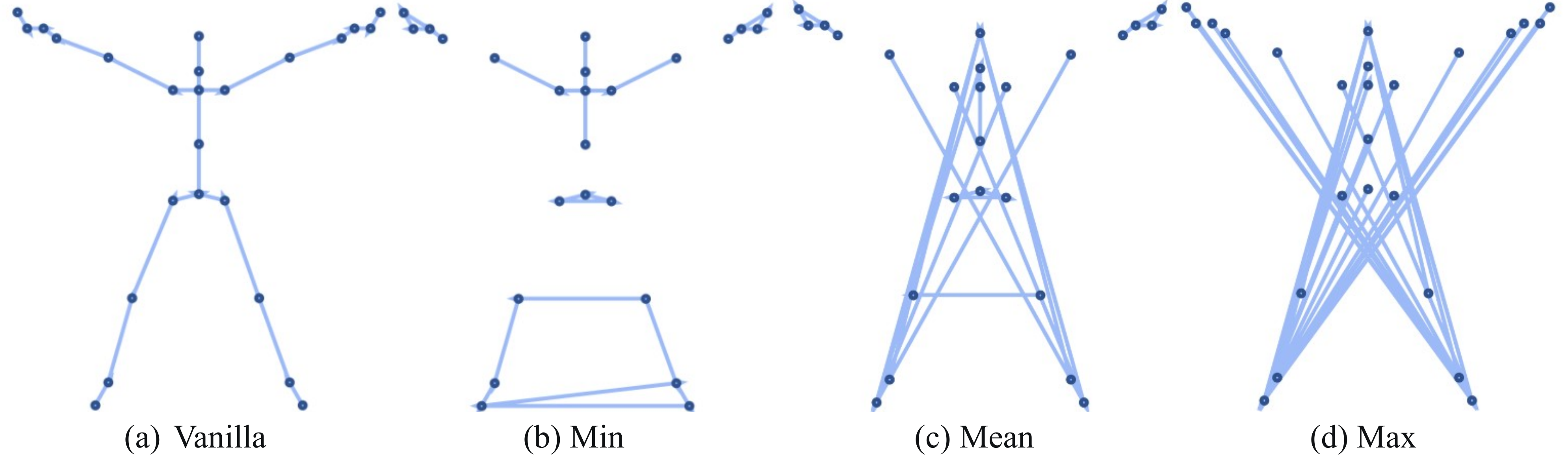}
	\caption{Multimodal representation of the skeleton. The arrows depict the ``bone'' vector from the source to the target joint. (a) is the bone matrix \cite{Yan_2018_STGCN}. (b), (c) and (d) is our minimum, mean, and maximum std summation matrix, respectively. It was found that nearly half of the skeletons in (b) are consistent with (a) but contain more detailed relationships, and (c) implicitly contains information about the connections between body parts, such as hands, torso and hands, torso and legs, and legs. Where (b) is used as our new ``bone'' representation.}
	\label{fig:build_new_bone}
\end{figure}

Nearly half of the bones in $ \widetilde{B} $ in Fig. \ref{fig:build_new_bone}(b) are consistent with the original bone matrix but focus more on some complicated relationships, such as fingertips, toe tips, and hip joints, and have symmetry.\footnote{The summation of standard deviation from (a) to (d) in Fig. \ref{fig:build_new_bone} are: 0.8265, 0.6977, 2.2312, and 4.1997.} Further, we use the inter-node distance of the GPR Graph in Sec. \ref{sec:GPR} to weight the bone vector and extract the bones with the minimum std summation as the new skeleton representation. Our a priori modal representation has two main advantages: (1) It captures interactions between remote nodes that are not directly connected. The interaction may help recognize some actions. (2) It contains context-dependence on class properties.

\subsection{Feature Aggressive}
\label{sec:encoding}
We introduce a multi-hop attention mechanism to model the context dependence of joints, adding further information about neighboring nodes in the topology learning process. This is considering enhancing the learning of long-distance relationships present in the a priori skeletal modal representation. Fig. \ref{fig:la-gc} provides an overview of the encoder-classifier structure of LA-GCN.
\begin{figure}[h]
	\centering
	\includegraphics[width=0.85\linewidth]{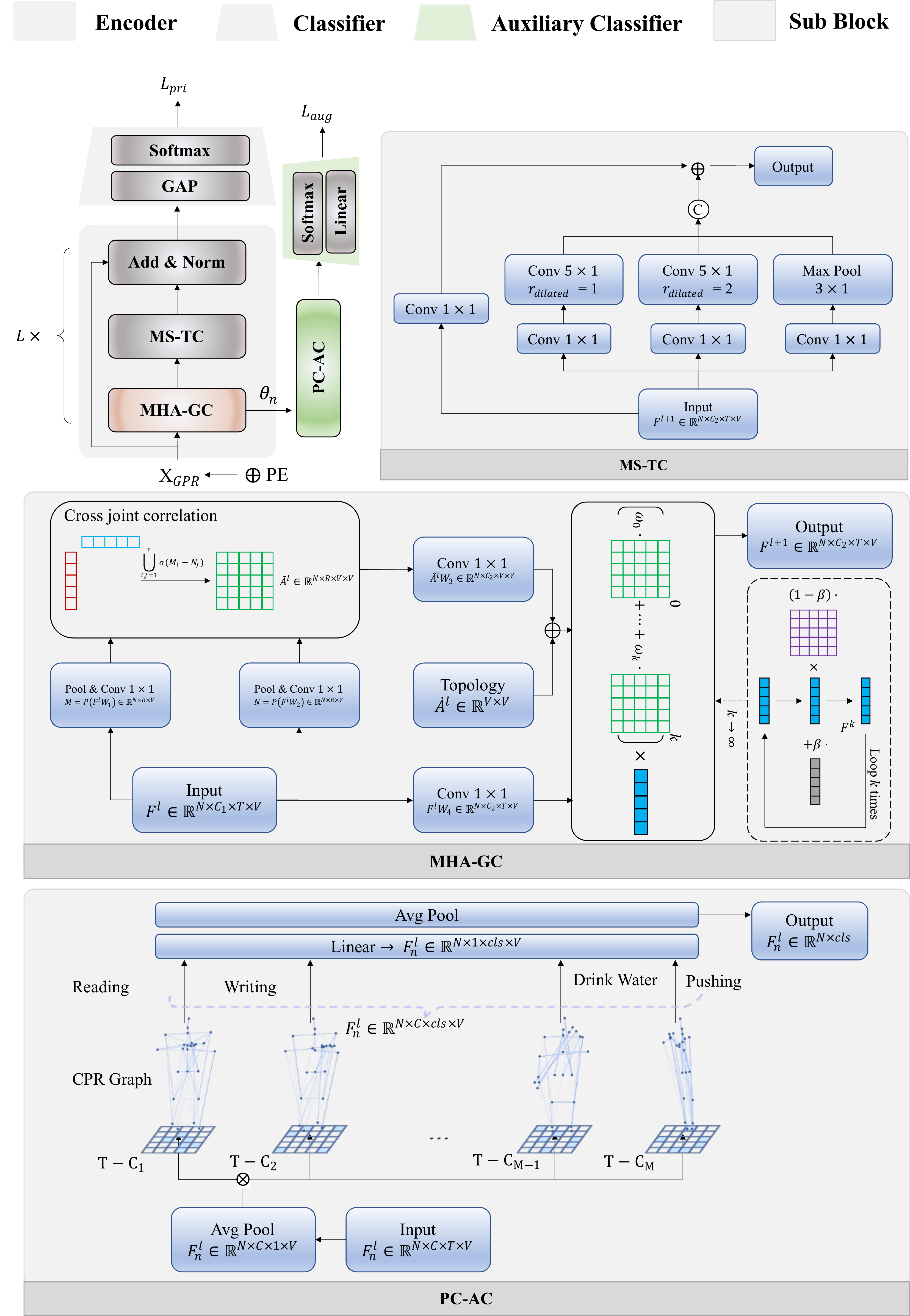}
	\caption{LA-GCN architecture. The model consists of a main branch and an auxiliary branch containing an encoder and a classifier. The details of each sub-block structure are shown in the gray box. The symbols $ N $, $ C $, $ T $, and $ V $  in this sub-block indicate the batch size, the number of input/output channels, the number of video frames, and the number of joints, respectively. We use the GPR Graph to generate a skeleton representation $ \text{X}_{GPR} $ as input for learning the topology. This allows the encoder to capture the inter-node context directly with global prior semantic information. The PC-AC guides our neural structure to learn conditional representations based on the class prior node relations.}
	\label{fig:la-gc}
\end{figure}

\subsubsection{More Information for Learnable Topology} 
When relying solely on the physical topology of the skeleton to aggregate features, information learning suffers from latency and dilution effects \cite{Yan_2018_STGCN}. As the relationship between joints changes with the execution of actions, relationship learning between unnaturally connected nodes is affected by node spacing. These two issues also affect each other in the aggregation process of raw GC. For illustration, two remote nodes have just established a connection at some point, and they may be diluted or lost after averaging and non-linear transformation operations in the neighborhood (see Eq.\ref{formula:gcn}). The information delay will be further aggravated.

Some approaches \cite{Chen_2021_CTRGCN,Chi_2022_InfoGCN} use an attention mechanism to guide the learning of the internal topology to complement the node relations adaptively. However, this is not optimal because it contains only first-order neighbor information in the semantic space. The neighborhood information indicated by the semantic space needs to be sufficiently learned, which is especially true for our a priori modal representation with contextual information. Like the video understanding task, allow the network to focus first on extracting intra-frame features and then on the fusion of inter-frame features \cite{mm_SnoekWS05_EarlyVersusLateFusion}. Here we want to allow the nodes to learn more fully before integrating. Therefore, we propose an architecture that uses a multi-hop attention mechanism to capture each node's neighborhood to improve the efficiency of information exchange and speed up the convergence of the GCN.

\subsubsection{Pre-preparation of The Input Data}
Before the skeleton sequence $\text{X}$ is fed into the encoder for feature aggregation, we need to make two preparations: one is to weight $\text{X}$ using the GPR Graph as mentioned in Sec. \ref{sec:multimodalrp}, and the other is to add the position embedding (PE) after $\text{X}$ has been linearly transformed to obtain the embedding representation of the nodes. The $ PE $ is the embedding that contains the abstract position of each node in all nodes $ V $. This gives us the initial feature representation:
\begin{equation}
	\label{eq:layer0_input}
	F^{(0)} = \text{X}_{GPR}W_0 + PE,
\end{equation}
where $ W_0 $ is the parameter matrix, $ F^{(0)} \in \mathbb{R}^{N \times C \times T \times V} $, and $ PE \in \mathbb{R}^{C \times V} $. If using the raw skeleton modal, the $ \text{X}_{GPR} $ in Eq. \ref{eq:layer0_input} is changed to X. 

\subsubsection{Encoder Structure}
The base block of the encoder consists of two sub-blocks MHA-GC and MS-TC responsible for the spatial and temporal modeling, respectively. As shown in Fig. \ref{fig:la-gc}, the output features $ F^{l} $ of the base block are obtained by adding the hidden layer features output by the two submodules using BN normalization with the upper layer feature $ F^{l-1} $ after residual connection.

\noindent\textbf{MS-TC} We use the commonly used multi-scale temporal encoding module \cite{Chi_2022_InfoGCN,Chen_2021_CTRGCN,Liu_2020_MSG3D} to model skeletal sequences with different time lengths. The module contains two dilated convolutions with different kernel settings and a max pooling layer. The output is obtained by adding the skip connections with the $ 1 \times 1 $ convolution.

\noindent\textbf{MHA-GC} We propose a new multi-hop attention graph convolution module for inter-node relationship modeling called MHA-GC. Computing multi-hop attention \cite{ijcai_WangY0L21_MultiHop} for complementary neighborhood node information, MHA-GC first computes the first-order attention on all nodes $ \bar{A}^{l} $ for multi-hop diffusion.

As shown in Fig. \ref{fig:la-gc}, at each layer $ l $, the feature $ F^{l} $ is passed through two branches containing the transformation equation P, consisting of a pooling layer and a $ 1\times 1 $ convolution, respectively. Feature vectors  $ M, N \in \mathbb{R}^{N \times R \times V} $ are obtained after P, where $ R $ is the reduced-to-feature dimension. Any pair of vertices $ (v_i, v_j) $ in $ M $ and $ N $ is computed separately to obtain the first-order neighborhood information $ \tilde{A}^{l} = \bigcup^{v}_{i,j = 1} \sigma(M_i-N_j) $, where $ \sigma $ is the activation function, $ v $ indicates all nodes, and $ \tilde{A}^{l}_{ij} $ denotes the messages aggregation from node j to node i.

We combine the shared topology $ \dot{A}^{l} \in \mathbb{R}^{ V \times V } $ with the linear mapping of learnable attention $ \tilde{A}^{l} \in \mathbb{R}^{R \times V \times V} $ to obtain the refined attention:
\begin{equation}
	\bar{A}^{l}= \dot{A}^{l}+\gamma\tilde{A}^{l}W_{3},
\end{equation}
where $ \gamma $ and $ W_3 $ are the refinement weight parameters and mapping layer weights, respectively. The refinement method of $ \bar{A}^{l} $ and the generation method of the GPR Graph are based on calculating feature distance between nodes to ensure the consistency of feature semantics.

Then, the attention diffusion module calculates the attention between indirectly connected node pairs through a diffusion process based on the first-order attention matrix. Our multi-hop attention is computed as follows:
\begin{equation}
	\label{eq:relation_multi_one}
	\bar {\mathcal{A}} = \sum^{k}_{i=0} \omega_i \bar{A}^{i}, 
\end{equation}
where $ \omega_i = \beta(1- \beta)^{i} $, $ \beta \in (0, 1] $ is the decay factor of $ \bar{A} $, and $ \omega_i > \omega _{i+1} $; $ \bar{A}^{i} $ is the power of the matrix $ \bar{A} $. The original GC layer passes information in a one-hop attention weighting pattern (see Eq. \ref{formula:gcn}). The power matrix $ \bar{A}^{i} $ gives node information paths of length i, increasing the receptive field for associating relations between nodes. The implementation of $ \omega $ is based on inductive bias, which determines the weight of the contribution of node $ j $ acting on each node $ i $ on the path, and the larger the number of hops, the smaller the weight. The overlap of node aggregation on the same path also alleviates the information dilution effect of the original GC layer aggregating two remote nodes.

Finally, we define the feature aggregation function of the $ \bar{\mathcal{A}} $-based MHA-GC as
\begin{equation}
	F^{l+1} = \sigma(\bar{\mathcal{A}}^lF^lW_4^l),
\end{equation}
where $ \sigma $ denotes the activation function ReLU \cite{Nair_2010_ReLU} and $ W_4 $ is the weights of the output layer.
 
\subsection{A Priori Consistency-Assisted Classification Module}
\label{sec:pc-ac}
The goal of our a priori consistency-assisted classification (PC-AC) module is to design a class relationship topology graph containing a priori node relationships related to the predicted actions to help the GCN perform node feature aggregation. Specifically, the PC-AC module adds additional branch co-supervision to the GCN at training time, forcing each feature that passes through a branch containing a specific category topology to be predicted for that class.
 
As shown in Fig. \ref{fig:gen_graph}, we generate the category topology graph T-C using the text features constructed in Sec. \ref{sec:GPR}.
For each action, the text features of N nodes are combined two by two to calculate the similarity. We refer to T-C as a class topology exemplar and assume that they contain the node relationships that should be present to identify an action.
 
Next, to achieve our goal, LA-GCN will train the main and PC-AC auxiliary branches in a standard multi-task learning setup. We represent the entire multi-task model with a parameter $ \theta $ as a function $ f_\theta(x) $ and the input is $ \text{X} $. This parameter $ \theta $ will be updated by the common loss of the primary and auxiliary branches. The input features are obtained using a hard parameter-sharing approach \cite{Zhang_2018_Auxiliary}. Specifically, the category prediction of the auxiliary branch uses a set of features shared by the main branch $ \theta_n $, as shown in Fig. \ref{fig:la-gc}, where n is the selected main branch layer. After the final feature layer, we applied a task-specific layer to output the corresponding predictions, with the softmax classifier used in both branches. In this case, the classification head of the main branch of the model consists of an average global pool, and the classification head of the secondary branch consists of a fully connected layer and an average pooling layer.
 
We denote the main prediction as $ {f_\theta}^{pri}(x) $, the auxiliary prediction as $ {f_\theta}^{aux}(x) $, and the ground truth label as $ \hat y $. Given the corresponding layer features $ F_{n}^{l} $, and the category topology T-C, the cross-entropy loss in terms of the auxiliary branch is defined as:
\begin{equation}
	\label{formula:pc-ac loss}
	L_{aug} = -\sum_{k} y_k \cdot \mathit{log}(\hat{y_k}),
\end{equation}
where $ y_k $ is the one-hot vector containing the accurate category $ c $ of the sample $ \text{X} $, and $ \hat{y_k} = \frac{e^{z_i}}{\sum_{k} e^{z_k}}$, logit $ z $ is the output of feature $ Z_p $ containing the category prior after our auxiliary classification head, while $ Z_p \in \mathbb{R}^{N \times C \times cls \times V}$ is obtained by multiplying $ F_{n}^{l} \in \mathbb{R}^{N \times C \times 1 \times V} $ and T-C $ \in \mathbb{R}^{cls \times V \times V} $ in Fig. \ref{fig:la-gc}. The PC-AC module aims to maximize the likelihood that the output logit $ z_i $ belongs to category $ c $. For the primary supervision of the predicted action labels, logit is then the output of the main branch classification head. Ultimately, to update the parameters $ \theta $ of LA-GCN, we define the multi-task objective as:
\begin{equation}
	\label{formula:la-gcn loss}
	\mathop{\text{arg min}}_{\theta} \; ( L({f_\theta}^{pri}(x), \hat y)+ \lambda L({f_{\theta_n}}^{aux}(x), \hat y) ),
\end{equation}
where $ \lambda $ is the hyper-parameter used to adjust the $ L_{aug} $ weight. When testing, LA-GCN will drop the auxiliary branch to use the main branch for prediction.
 
The PC-AC module forces the features to be correctly classified is a relatively complex task, and it plays a role in regularization to some extent. During testing, the PC-AC module was removed without affecting the inference speed of the model. Meanwhile, T-C belongs to the fully connected graph, which contains relatively well-established inter-node relationships in each class. This property of T-C allows the PC-AC module to alleviate the over-smoothing problem \cite{ijcai_WangY0L21_MultiHop,icml_Xhonneux20_Continuous,aaai_Liu19_GeniePath,AAAI18_Li_DeeperInsights} common to GCN networks. Class-specific semantic information in T-C also improved the recognition rate of similar process actions, as shown in Fig. \ref{fig:pca}: ``reading'' and ``writing'' improved by 9.16\% and 8.46\%, respectively.

\section{Experiments}
\subsection{Datasets}
\textbf{NTU RGB+D.} The dataset \cite{Shahroudy_2016_NTURGBD} contains 56,880 skeleton action sequences and has 60 classes, which can be categorized into daily, healthy, and interactive behaviors. All action data were captured simultaneously by three Microsoft Kinect v2 cameras from different angles. Two evaluation schemes are presented in this paper:1) cross-subject (X-sub), where the training set is the data collected from 20 subjects and the data from the remaining 20 is the test set. 2) cross-view (X-view), using the data captured by the number 2 and 3 cameras as the training set and the data captured by the number 1 camera as the test set.

\noindent\textbf{NTU RGB+D 120.} The most commonly used is the incremental dataset NTU RGB+D 120 \cite{Liu_2020_NTURGBD120} of NTU RGB+D. It has 120 classes, 106 subjects, and 114,480 skeleton sequences. All of these actions were acquired by three cameras. Two benchmarks were introduced to evaluate model performance in NTU RGB+ D120: 1) crossover subjects (X-sub), which, as in NTU RGB+D, requires differentiation between two groups of subjects, and each group consists of 53 volunteers. 2) crossover setup (X-setup), in which data are acquired in different configurations. The training set is the even configuration data, and the test set is the odd configuration data.

\noindent\textbf{NW-UCLA.} The dataset \cite{Wang_2014_NWUCLA} contains ten basic human actions and 1494 video clips from 10 actors. All data were obtained from three Kinect cameras captured simultaneously. Following the evaluation protocol introduced in \cite{Wang_2014_NWUCLA}, the data from the first two cameras are used as the training set, and the data from the last camera as the test set.

\subsection{Training}
All experiments used the PyTorch deep learning framework \cite{nips_Paszke19_PyTorch} on 2$\times$ NVIDIA RTX 3090 GPUs. We used SGD with Nesterov momentum (0.9) as an optimizer with a weight decay of 0.0004. The entire training epoch was 110. The first five epochs were used with a warm-up strategy \cite{He_2016_warmup} to stabilize the training process. The primary learning rate is 0.1 and is reduced by 0.1 at 90 and 100 epochs. On NTU RGB+D and NTU RGB+D 120, the batch size of the experiment is 200, and the data are processed similarly \cite{Zhang_2020_SGN,Chen_2021_CTRGCN}. For NW-UCLA, the batch size was 64, and the same data preprocessing method as in \cite{Cheng_2020_ShiftGCN} was used. The above configuration was used for all subsequent experiments if not explicitly stated.
The source code of LA-GCN is publicly available on \url{https://github.com/damNull/LAGCN}.

\textbf{Overall Architecture.}
The encoder consists of nine basic blocks with 64-64-64-128-128-128-256-256-256 channels. The time dimension is reduced to half-time in blocks four and six. The skeleton sequence is first subjected to a feature transformation to obtain an embedding representation of the nodes, and the transformation uses a fully connected layer. This representation is then passed through the spatial and temporal modules to extract the sequence features.

\subsection{Results}
\begin{table}[!t]
	\caption{The comparison of Top-1 accuracy (\%) on the NTU RGB+D \cite{Shahroudy_2016_NTURGBD} benchmark.\label{table:ntu60}}
	\centering
	\begin{tabular}{c||c||c}
		\hline
		Method           & X-Sub & X-View \\
		\hline
		VA-LSTM \cite{pami_Zhang19_VALSTM} & 79.4 & 87.6    \\ 
		ST-GCN \cite{Yan_2018_STGCN}       & 81.5 & 88.3    \\
		AS-GCN \cite{Li_2019_ASGCN} 	   & 86.8 & 94.2    \\
		2s-AGCN \cite{Shi_2019_2sAGCN} 	   & 88.5 & 95.1    \\
		AGC-LSTM \cite{Si_2019_RNN}        & 89.2 & 95.0    \\
		Directed-GNN \cite{Shi_2019_4sDirectedGNN} & 89.9          & 96.1             \\
		ST-TR \cite{Plizzari_2021_Transformer}     & 90.3 		   & 96.3             \\
		Shift-GCN \cite{Cheng_2020_ShiftGCN} 	   & 90.7          & 96.5             \\
		DC-GCN+ADG \cite{Cheng_2020_DCGCN} 		   & 90.8          & 96.6             \\
		Dynamic-GCN \cite{Ye_2020_dynamicGCN} 	   & 91.5          & 96.0             \\
		MS-G3D \cite{Liu_2020_MSG3D}           	   & 91.5          & 96.2             \\
		DDGCN \cite{Korban_2020_DDGCN}             & 91.1          & 97.1             \\
		MST-GCN \cite{Chen_2021_MSTGCN}            & 91.5          & 96.6             \\
		EfficientGCN \cite{pami_SongZSW23_EfficientGCN} & 92.1 & 96.1  \\
		CTR-GCN \cite{Chen_2021_CTRGCN}  & 92.4 & 96.8  \\
		Info-GCN \cite{Chi_2022_InfoGCN} & 93.0 & 97.1  \\
		\hline
		LA-GCN (ours) & \bf 93.5 & \bf 97.2  \\
		\hline
	\end{tabular}
\end{table}
	
\begin{table}[!t]
	\caption{The comparison of Top-1 accuracy (\%) on the NTU RGB+D 120 \cite{Liu_2020_NTURGBD120} benchmark.\label{table:ntu120}}
	\centering
	\begin{tabular}{c||c||c}
		\hline
		Method            & X-Sub & X-Set \\
		\hline
		Part-Aware LSTM \cite{Shahroudy_2016_NTURGBD} & 26.3    & 25.5  \\
		ST-LSTM \cite{Liu_2016_ST_LSTM} 		& 55.7 & 57.9     \\
		RotClips+MTCNN \cite{Ke_2018_RotClip}   & 62.2 & 61.8     \\
		ST-GCN \cite{Yan_2018_STGCN}            & 70.7 & 73.2     \\
		2s-AGCN \cite{Shi_2019_2sAGCN} 			& 82.9 & 84.9     \\
		SGN \cite{Zhang_2020_SGN} 				& 82.9 & 84.9     \\
		ST-TR \cite{Plizzari_2021_Transformer}  & 85.1 & 87.1     \\
		Shift-GCN \cite{Cheng_2020_ShiftGCN} 	& 85.9 & 87.6     \\
		MS-G3D \cite{Liu_2020_MSG3D} 			& 86.9 & 88.4     \\
		Dynamic-GCN \cite{Ye_2020_dynamicGCN}   & 87.3 & 88.6     \\
		MST-GCN \cite{Chen_2021_MSTGCN} 		& 87.5 & 88.8     \\
		EfficientGCN \cite{pami_SongZSW23_EfficientGCN} & 88.7 & 88.9  \\
		CTR-GCN \cite{Chen_2021_CTRGCN}  & 88.9 & 90.6  \\
		Info-GCN \cite{Chi_2022_InfoGCN} & 89.4 & 90.7  \\
		\hline			
		LA-GCN (Joint)  	& 86.5  & 88.0     \\
		LA-GCN (Joint+Bone) & 89.7  & 90.9     \\
		LA-GCN (4 ensemble) & 89.9  & 91.3     \\
		LA-GCN (6 ensemble) & \bf 90.7  & \bf 91.8     \\
		\hline
	\end{tabular}
\end{table}

\begin{table}[!t]
	\caption{The comparison of Top-1 accuracy (\%) on the NW-UCLA \cite{Wang_2014_NWUCLA} benchmark.\label{table:ucla}}
	\centering
	\begin{tabular}{c||c}
		\hline
		Methods            & Top1 \\
		\hline
		Lie Group \cite{Veeriah_2015_LieGroup}        & 74.2 \\
		Actionlet ensemble \cite{Wang_2014_Actionlet} & 76.0 \\
		HBRNN-L \cite{Du_2015_Hierarchical}           & 78.5 \\
		Ensemble TS-LSTM  \cite{Lee_2017_TSLSTM}      & 89.2 \\
		\hline
		AGC-LSTM \cite{Si_2019_RNN} 			& 93.3 \\
		4s Shift-GCN \cite{Cheng_2020_ShiftGCN} & 94.6 \\
		DC-GCN+ADG \cite{Cheng_2020_DCGCN} 		& 95.3 \\
		CTR-GCN \cite{Chen_2021_CTRGCN} 		& 96.5 \\
		InfoGCN \cite{Chi_2022_InfoGCN} 		& 97.0 \\
		\hline
		LA-GCN (ours)   & \textbf{97.6} \\
		\hline
	\end{tabular}
\end{table}
Many state-of-the-art methods use a multi-stream fusion strategy. Specifically, the final results of the experiments fuse four modes (4s), namely, joint, bone, joint motion, and bone motion. They are the original joint coordinates, the vector obtained by differencing the coordinates of the joints with physical connections, the differentiation of the joint in the time dimension, and the differentiation of the bone in the time dimension, respectively. After training a model for each data stream, the softmax scores of each data stream are summed up as the final score during inference. For a fair comparison, we used the same setup as in \cite{Chi_2022_InfoGCN,Chen_2021_CTRGCN,Cheng_2020_ShiftGCN,Ye_2020_dynamicGCN}.

Our models are compared with state-of-the-art methods on the datasets NTU RGB+D \cite{Shahroudy_2016_NTURGBD}, NTU RGB+D 120 \cite{Liu_2020_NTURGBD120}, and NW-UCLA \cite{Wang_2014_NWUCLA}, respectively, and the experimental results are displayed in Table \ref{table:ntu60}, \ref{table:ntu120}, and \ref{table:ucla}. Our approach achieves state-of-the-art performance on three datasets under almost all evaluation benchmarks. We use the proposed new ``bone'' representation for 6s integration. The bones used for training 5s and 6s were selected from the proposed prompt 2 (p2) and prompt 5 (p5), respectively (see Appendix). Compared to the joint stream (86.5\%), our method improved by 3.2\%, 3.4\%, and 4.2\% for 2s, 4s, and 6s, respectively. On NTU-RGB+D 120, according to the same settings, our 4s fusion model is 0.6\% and 1.0\% higher than CTR-GCN 4s \cite{Chen_2021_CTRGCN} and 0.5\% higher than both InfoGCN 6s \cite{Chi_2022_InfoGCN}. Our 6s ensemble is 1.1\% and 1.3\% higher than InfoGCN 6s. Notably, our method is the first to effectively aid topological modeling using the knowledge of large-scale pre-trained models. 

\subsection{Ablation Study}
In this section, we analyze different configurations and ablation studies for each component of LA-GCN on the X-sub benchmark of the NTU RGB + D120 \cite{Liu_2020_NTURGBD120} dataset.

\begin{table}[!t]
	\caption{(i) The comparison of accuracy without and with PC-AC, where the $ \lambda $ of $ L_{aug} $ is 0.2. (ii) Contribution of different text prompts.\label{table:pcac}}
	\centering
	\begin{tabular}{c||c}
		\hline
		Methods  & Top1 \\
		\hline
		w/o $L_{aug}$ & 84.9\\
		$ L_{total} $ & $ {\bf 86.1}^{\uparrow 1.2} $\\
		\hline
		p1: [J] function in [C].                      & $ {85.6}^{\uparrow 0.7} $ \\
		p2: What happens to [J] when a person is [C]? & $ {85.8}^{\uparrow 0.9} $\\
		p3: What will [J] act like when [C]?          & $ {\bf 86.1}^{\uparrow 1.2} $\\
		p4: When [C][J] of human body.                & $ {85.5}^{\uparrow 0.6} $\\
		p5: When [C] what will [J] act like?          & $ {85.7}^{\uparrow 0.8} $\\
		p6: When a person is [C], [J] is in motion.   & $ {85.5}^{\uparrow 0.6} $\\
		\hline
	\end{tabular}
\end{table}

\begin{table}[!t]
	\caption{(i) The comparison of accuracy without and with PC-AC, where the $ \lambda $ of $ L_{aug} $ is 0.2. (ii) Contribution of different text prompts.\label{table:mhagc}}
	\centering
	\begin{tabular}{c||c}
		\hline
		Methods & Top1 \\
		\hline
		$\bar{A}$ & 86.1 \\
		$ \bar {\mathcal{A}} $ & \bf 86.5\\
		\hline
		$ 1\text{-}hop:\bar{A} $ & 86.1 \\
		$ 2\text{-}hop $ & 86.3 \\
		$ 3\text{-}hop $ & OOM \\
		\hline
		$ {2\text{-}hop}^{1st} $ & 86.1 \\
		$ {3\text{-}hop}^{1st} $ & 86.2 \\
		$ {4\text{-}hop}^{1st} $ & \bf 86.5 \\
		$ {5\text{-}hop}^{1st} $ & 85.9 \\
		\hline
	\end{tabular}
\end{table}

\subsubsection{Effectiveness of PC-AC} The validation of the PC-AC module in Sec. \ref{sec:pc-ac} includes the overall improvement due to $ L_{aug} $ and the performance of different generated prompts for the class exemplar topology T-C graphs. The model accuracy steadily improves by adding $ L_{aug} $ shown in Table \ref{table:pcac}.
There are differences between the T-C graphs obtained from different textual cues, and we design the prompt text p as a complete sentence containing action names [C] and joints [J]. For sample, p1 emphasizes ``nodes function,'' and p2 describes the change of nodes state, as demonstrated in Appendix Table \ref{tab:lambda}. The best result was obtained by p3: ``What will [J] act like when [C]'' with a 1.2\% improvement.

\begin{figure}[h]
	\centering
	\includegraphics[width=\linewidth]{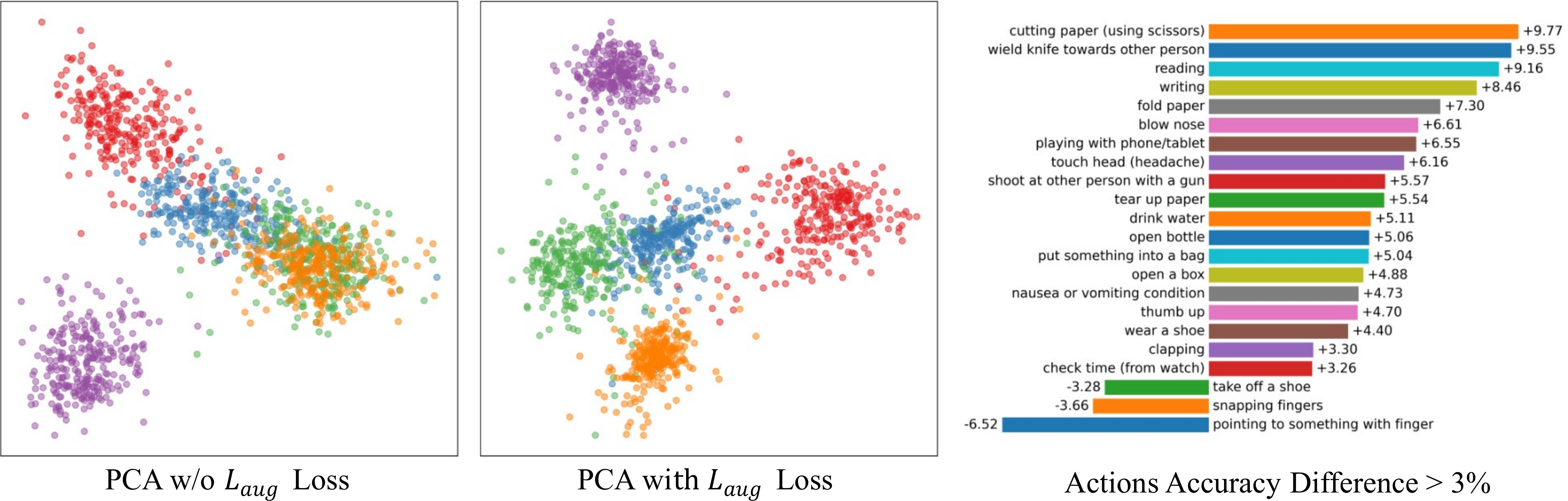}
	\caption{The left and middle subplots show the PCA projections of potential representations with and without the auxiliary loss $ L_{aug} $ of the PC-AC. The five action classes visualized were randomly selected from NTU RGB+D 120. The right subplot shows the visualization of actions with a greater than 3\% change in accuracy.}
	\label{fig:pca}
\end{figure}

We compare action representations based on trained models with and without PC-AC by principal component analysis (PCA \cite{Maaten2008VisualizingDU}), as shown in Fig. \ref{fig:pca}. Compared to the case without $ L_{aug} $ loss, the potential representations learned with the aid of the category prior present a class-conditional distribution with a more apparent separation and perform better in the subspace differentiation of process-similar actions. In addition, it makes the intra-class features more convergent.

\begin{figure*}[h]
	\centering
	\includegraphics[width=0.85\linewidth]{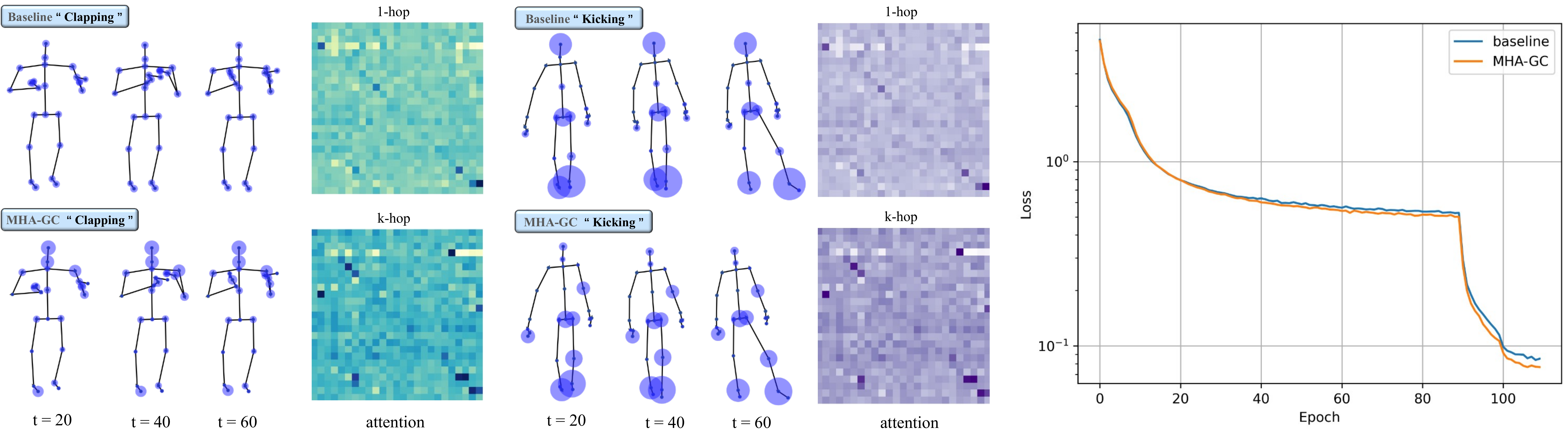}
	\caption{The left and middle subplots visualize the attention and its matrices of two actions, "clapping" and "kicking," with and without MHA-GC. The action selection focuses on the upper and lower body, respectively. Each action has a sampling interval of 20 frames. The larger the attention value, the larger the circle's radius centered on the joint. The right subplot shows the visualization of the loss function before and after adding MHA-GC.}
	\label{fig:vis-topo}
\end{figure*}

\subsubsection{Improved Interaction Between Neighbor Nodes.}
\label{sec:mha_abl}
Based on the above baseline, we design two schemes in Table \ref{table:mhagc} to replace the original $ 1\text{-}hop $ GC.
The first one uses MHA-GC for all blocks, and the other replaces only the $ 1\text{-}hop $ GC in the first layer. Then, we select $ k $ on top of that. We observe that the message passing performance of multi-hop GC for both schemes is strictly better than that of $ 1\text{-}hop $ GC. 
However, option I suffer from memory overflow at $ 3\text{-}hop $. Due to the relatively simple topology of the skeleton, complete replacement is somewhat oversupplied. The best performance in scheme II is 86.5\% at $ k = 4 $. We visualize the MHA in Fig. \ref{fig:vis-topo}. The multi-hop learnable topology greatly improves the description of the node relationships.
The loss curve on the right side shows that MHA-GC can accelerate the model convergence.

\textbf{Comparison of Complexity with Other Models}
As shown in the Fig. \ref{fig:la-gc}, $ \omega_0 = \beta $ and $ \bar{A}^0 = I $, $ \bar{\mathcal{A}}F^l $ is an approximation of below implementation when the condition  k-hop $ \to \infty $ is satisfied:
\begin{equation}
	\label{eq:dash_implement}
	F^{k+1} = (1 - \beta)\bar{A}F^{k} + \beta F^l,
\end{equation}
where $ 0 \leq k < K $. If $ k \to \infty $ , then $ lim_{k \to \infty} \sum^{k}_{i=0} \omega_i = 1 $ and $ \omega_i > 0 $, with the proposition $ lim_{K \to \infty}F^K = \bar{\mathcal{A}}F^l $ holds. The proof of this is given in the Appendix. This approximation indicates that the model complexity is in the same magnitude order as the previous attention-based SOTA approach \cite{Cheng_2020_DCGCN,Liu_2020_MSG3D,Chi_2022_InfoGCN}.
We compare the model and computational complexity with these methods on NTU-RGB+D 120 \cite{Liu_2020_NTURGBD120}. As shown in Table \ref{tab:complex}, our model balances complexity and final accuracy well. Our model is 1.4\% and 1.6\% more accurate than the previous SOTA InfoGCN and CTR-GCN while having the same level GFLOPs.
\begin{table}
	\caption{Comparison of computational and model complexity of the state-of-the-arts on NTU RGB+D 120 dataset.\label{tab:complex}}
	\centering
	\begin{tabular}{c||c||c||c}
		\toprule
		Methods & X-Sub & GFLOPs & Param (M)  \\
		\midrule
		DC-GCN \cite{Cheng_2020_DCGCN}  & 84.0 & 1.83 & 3.37 \\
		MS-G3D \cite{Liu_2020_MSG3D} & 84.9 & 5.22 & 3.22 \\
		CTR-GCN \cite{Chen_2021_CTRGCN} & 84.9 & 1.97 & 1.46 \\
		InfoGCN \cite{Chi_2022_InfoGCN} & 85.1 & 1.84 & 1.57 \\ 
		\midrule
		Ours    & \bf 86.5 & \bf 1.76 & \bf 1.46 \\
		\bottomrule
	\end{tabular}
\end{table}

\section{Limitations}
Despite the good results of the proposed LA-GCN in experiments, the attempts to acquire and apply LLM prior knowledge information are more oriented towards manual design. Further integration of language model assistance with self-supervised learning should be exciting. In addition, our skeleton modal representation is obtained based on the statistics of the existing dataset. It would also be interesting to see if new observations and conclusions can be made on more categories of datasets. Finally, expanding LA-GCN to in-the-wild settings is also worth thinking about.

\section{Conclusion}
We present LA-GCN, a representation learning framework aided by prior knowledge of language models. It builds on the cognitive neuroscience of human action recognition, i.e., the recognition process requires multiple brain regions to work together.  
LA-GCN accomplishes processes of encoding critical information and prediction based on the category a priori knowledge assistance. 
We introduce a new ``bone'' multimodal representation guided by global a priori information for model integration. Further, we propose a new multi-hop attention-based graph convolution module, MHA-GC, and demonstrate that it can effectively improve information interaction efficiency for modeling learnable topologies. The ``bone'' representation and MHA-GC are jointly used to ensure crucial information encoding. Ultimately, our model exhibits state-of-the-art performance on three popular datasets for skeleton-based action recognition.


{\appendices
\section{Structure Approximation Proposition}
As described in Sec. \ref{sec:mha_abl}, the structure in the dashed box in Fig. \ref{fig:la-gc} MHA-GC sub-block can be approximated by our k-hop structure under certain conditions. The equation of the dash box implementation is $ {F}^{k+1} = (1 - \beta)\bar{A}{F}^{k} + \beta F^l, $ where $ k $ is the iterations number of attention diffusion, $ F^k $ and $ F^{k+1} $ is the state of $ F^l $ at the k-th and k+1-th hop. 
\begin{proposition}
	\label{prop:struc_approx}
	$ lim_{K \to \infty}F^K = \bar{\mathcal{A}}F^l $
\end{proposition}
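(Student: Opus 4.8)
The plan is to solve the linear recursion $F^{k+1}=(1-\beta)\bar{A}F^{k}+\beta F^{l}$ in closed form and then pass to the limit. First I would fix the natural initial condition $F^{0}=F^{l}$ (the state at hop zero) and unroll the recursion by induction on $k$. A short induction gives
\begin{equation}
F^{K}=\beta\sum_{i=0}^{K-1}(1-\beta)^{i}\bar{A}^{i}F^{l}+(1-\beta)^{K}\bar{A}^{K}F^{l},
\end{equation}
because multiplying the inductive hypothesis by $(1-\beta)\bar{A}$ shifts the sum's indices up by one, the standalone $\beta F^{l}$ restores the $i=0$ term, and the old residual $(1-\beta)^{K}\bar{A}^{K}F^{l}$ is promoted to the new residual $(1-\beta)^{K+1}\bar{A}^{K+1}F^{l}$. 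The first summand is exactly the truncated multi-hop aggregation $\sum_{i=0}^{K-1}\omega_{i}\bar{A}^{i}F^{l}$ with $\omega_{i}=\beta(1-\beta)^{i}$, so the whole proof reduces to two limit computations.

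The second step is to identify the limit of the partial sum. Since $\beta\in(0,1]$ we have $1-\beta\in[0,1)$, and assuming $\bar{A}$ is a normalized attention matrix with spectral radius at most one (as the distance-based normalization used to build $\bar{A}$ is intended to ensure), the operator series $\sum_{i=0}^{\infty}(1-\beta)^{i}\bar{A}^{i}$ converges geometrically. Hence the first summand tends to $\beta\sum_{i=0}^{\infty}(1-\beta)^{i}\bar{A}^{i}F^{l}=\sum_{i=0}^{\infty}\omega_{i}\bar{A}^{i}F^{l}=\bar{\mathcal{A}}F^{l}$; this is consistent with the stated fact $\lim_{K\to\infty}\sum_{i=0}^{K}\omega_{i}=1$, which is just the geometric identity $\beta\sum_{i\geq0}(1-\beta)^{i}=1$.

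The last step, and the part I expect to be the main obstacle, is to show the residual term vanishes: $\lim_{K\to\infty}(1-\beta)^{K}\bar{A}^{K}F^{l}=0$. This is where a bound on $\bar{A}$ is essential. By submultiplicativity of a consistent operator norm, $\|(1-\beta)^{K}\bar{A}^{K}F^{l}\|\leq(1-\beta)^{K}\|\bar{A}\|^{K}\|F^{l}\|$, so whenever $(1-\beta)\|\bar{A}\|<1$ the residual decays to zero; under the spectral-radius-at-most-one normalization this holds for every $\beta\in(0,1]$. Combining the two limits yields $\lim_{K\to\infty}F^{K}=\bar{\mathcal{A}}F^{l}$, which is the claim. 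The delicate point worth stating explicitly in the write-up is precisely this normalization hypothesis on $\bar{A}$: without control of $\|\bar{A}\|$ the residual need not vanish and the diffusion recursion would fail to approximate the closed-form multi-hop operator, so the approximation in Eq.~\ref{eq:dash_implement} is really a statement that holds on the normalized attention used by MHA-GC.
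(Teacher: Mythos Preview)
Your proposal is correct and follows essentially the same route as the paper: set $F^{0}=F^{l}$, unroll the recursion to obtain $F^{K}=\bigl((1-\beta)^{K}\bar{A}^{K}+\beta\sum_{i=0}^{K-1}(1-\beta)^{i}\bar{A}^{i}\bigr)F^{l}$, and then argue that the residual $(1-\beta)^{K}\bar{A}^{K}$ vanishes as $K\to\infty$. The only minor difference is in justifying that last step: the paper simply asserts that the entries satisfy $\bar{A}^{K}_{i,j}\in(0,1]$ and hence $(1-\beta)^{K}\bar{A}^{K}\to 0$ entrywise, whereas you phrase it via an operator-norm/spectral-radius bound and explicitly flag the normalization of $\bar{A}$ as the hypothesis that makes this work.
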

\begin{proof}
	Let $ E^0 = F^l $, then Eq. \ref{eq:dash_implement} becomes:
	\begin{equation}
		E^{k+1} = (1-\beta)\bar{A}E^k + \beta E^0.
	\end{equation}
	
	Let $ K > 0 $ be the hop number, we approximate $ \hat{F}^l $ by $ E^K $. By recursion, we can obtain
	\begin{equation}
		\begin{split}
			E^K & = (1-\beta)\bar{A}E^{K-1} + \beta E^0 \\
			& = (1-\beta)\bar{A}((1-\beta)\bar{A}E^{K-2} + \beta E^0) + \beta E^0 \\
			& = (1-\beta)^2 \bar{A}^2 E^{K-2} + \beta(1-\beta)\bar{A}E^0 + \beta E^0 \\
			& = (1-\beta)^3 \bar{A}^3 E^{K-3} + \beta(1-\beta)^2 \bar{A}^2 E^0 \\
			& + \beta(1-\beta)\bar{A}E^0 + \beta E^0 \\
			& \cdots \\
			& = ((1-\beta)^K \bar{A}^K + \beta\sum_{i=0}^{K-1} (1-\beta)^i \bar{A}^{i})E^{0}. \\	    
		\end{split}
	\end{equation}
	
	That is $ E^K = ((1-\beta)^K \bar{A}^K + \beta\sum_{i=0}^{K-1} (1-\beta)^i \bar{A}^{i})F^l $. As $ \beta \in (0,1] $ and $ \bar{A}_{i,j}^{K} \in (0,1] $, when $ K \to \infty $, the term $ (1-\beta)^K \bar{A}^K \to 0 $. 
	
	Thus, $ lim_{K \to \infty}E^K = (\sum_{i=0}^{K-1} \beta(1-\beta)^i \bar{A}^{i})F^l = \bar{\mathcal{A}}F^l. $ 
\end{proof} 

The skeleton sequence contains a fixed number of edges $ \varepsilon $, and by the above approximation, we conclude that the complexity of multi-hop attention does not differ much from one-hop attention. The complexity can all be expressed as $  O(|\varepsilon|) $, where the number of hops $ K $ is the factor that determines the complexity of multiple hops. A better representation can be obtained for skeletal data $ 1 \le K \le 4 $ in Sec. \ref{sec:mha_abl}. Also, if the structural complexity of the graph increases, then $ K $ needs to increase as well.

\section{Mathematical Explanation of Effectiveness for MHA}
In addition to the intuitive conclusion from Table \ref{table:mhagc} that the expressiveness of multiple hops is better than that of one hop, we can also perform a spectral analysis \cite{Sandryhaila_2013_GraphFourier} of $ \bar{A} $ and $ \bar{\mathcal{A}} $ to obtain relevant proof.

The relationship between our multi-hop and one-hop attention is given in Eq. \ref{eq:relation_multi_one}: $ \bar{\mathcal{A}} = \sum_{i=0}^{k} \omega_i\bar{A}^i $, $ \omega_i = \beta(1-\beta)^i $, and $ \bar{A}^i $ is the power matrix. In spectral analysis, the Jordan decomposition of graph attention $ \bar{A} $ is $ \bar{A} = U \bar{\Lambda} U^{-1}  $, where the $ i $-th column of $ U $ is the eigenvector $ u_i $ of $ \bar{A} $, $ \bar{\Lambda} $ is the diagonal matrix, and each element in $ \bar{\Lambda} $ is the eigenvalue corresponding to $ u_i $, $ \bar{\Lambda}_{ii} = \lambda_i $. Then we have:
\begin{equation}
	\bar{\mathcal{A}} = \sum_{l=0}^{K} \omega_l\bar{A}^l = \sum_{l=0}^{K} \omega_l(U \bar{\Lambda} U^{-1})^l
\end{equation}
The eigenvectors of the power matrix $ \bar{A}^n $ are same as $ \bar{A} $, we can get $ \bar{A}^n = (U \bar{\Lambda} U^{-1})(U \bar{\Lambda} U^{-1})\cdots(U \bar{\Lambda} U^{-1}) = U \bar{\Lambda}^n U^{-1} $. In addition, the eigenvectors of $ \bar{A} + \bar{A}^2 $ are same as $ \bar{A} $. By recursion, the summation of the $ \bar{A}^l, l \in [0, K] $ has the same eigenvectors as $ \bar{A} $, we have:
\begin{equation}
	\bar{\mathcal{A}} = U(\sum_{l=0}^{K} \omega_l\bar{\Lambda}^l)U^{-1}.
\end{equation}
Therefore, we have an analogy that the eigenvectors of $ \bar{\mathcal{A}} $ and $ \bar{A} $ are also the same.
\begin{lemma}
	Let $ \bar{\lambda}_i $ and $ \lambda_i $ be the $ i $-th eigenvalues of $ \bar{\mathcal{A}} $ and $ \bar{A} $, respectively. Then, the eigenvalue relationship function is
	\begin{equation}
		\label{eq:eigenvalue_relation}
		\bar{\lambda}_i = \sum_{l=0}^{\infty} \omega_l{\lambda_i}^l = \sum_{l=0}^{\infty} \beta(1-\beta)^l{\lambda_i}^l = \frac{\beta}{1-(1-\beta)\lambda_i}
	\end{equation} 
\end{lemma}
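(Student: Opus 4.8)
The plan is to piggyback entirely on the spectral structure already derived immediately before the lemma, namely that $\bar{\mathcal{A}}$ and $\bar{A}$ are simultaneously diagonalized by the same similarity transform $U$. The excerpt establishes $\bar{\mathcal{A}} = U(\sum_{l=0}^{K} \omega_l\bar{\Lambda}^l)U^{-1}$, so the first step is simply to read the eigenvalues off the diagonal. Since $\bar{\Lambda}$ is diagonal with $\bar{\Lambda}_{ii} = \lambda_i$, each power acts entrywise as $(\bar{\Lambda}^l)_{ii} = \lambda_i^l$, and the conjugation by $U$ and $U^{-1}$ relabels the eigenbasis without changing the spectrum. Hence the $i$-th eigenvalue of $\bar{\mathcal{A}}$ is the finite sum $\bar{\lambda}_i = \sum_{l=0}^{K} \omega_l\lambda_i^l$. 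This recovers the first equality in Eq.~\ref{eq:eigenvalue_relation} once the limit $K \to \infty$ is taken, consistent with Proposition~\ref{prop:struc_approx}.

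Next I would substitute the explicit weights $\omega_l = \beta(1-\beta)^l$ and factor the constant $\beta$ out of the sum, giving $\bar{\lambda}_i = \beta\sum_{l=0}^{\infty} \bigl((1-\beta)\lambda_i\bigr)^l$. The bracketed term is a single scalar raised to the power $l$, so the infinite series is an ordinary geometric series in the ratio $r = (1-\beta)\lambda_i$. Applying the closed form $\sum_{l=0}^{\infty} r^l = 1/(1-r)$ for $|r| < 1$ yields $\bar{\lambda}_i = \beta/\bigl(1-(1-\beta)\lambda_i\bigr)$, which is exactly the final expression in the statement. This is the routine part: once convergence is granted, the identity is a one-line geometric-sum evaluation.

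The main obstacle is therefore justifying the convergence condition $|(1-\beta)\lambda_i| < 1$, which is what licenses both the passage from the finite to the infinite sum and the use of the geometric-series formula. Here I would argue as follows. Since $\beta \in (0,1]$, the factor satisfies $1-\beta \in [0,1)$, so it suffices to control $|\lambda_i|$. The matrix $\bar{A}$ is the attention matrix built from $\sigma(M_i - N_j)$ after the topology refinement, and the companion Proposition~\ref{prop:struc_approx} already invokes the bound $\bar{A}^K_{i,j} \in (0,1]$; I would lean on the same boundedness to conclude that the spectral radius of $\bar{A}$ does not exceed one, i.e.\ $|\lambda_i| \le 1$. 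Combining $|\lambda_i| \le 1$ with $1-\beta < 1$ gives $|(1-\beta)\lambda_i| < 1$ strictly, so the geometric series converges absolutely and the closed form is valid. If one wishes to be fully rigorous about the edge case $|\lambda_i| = 1$, the strict inequality $1-\beta < 1$ (guaranteed whenever $\beta > 0$) is precisely what prevents the ratio from reaching the boundary, so no separate divergent case arises.
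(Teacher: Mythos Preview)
Your geometric-series evaluation is correct and matches the paper's computation, but the convergence justification has a genuine gap. You argue that because Proposition~\ref{prop:struc_approx} uses $\bar{A}^K_{i,j}\in(0,1]$, the spectral radius of $\bar{A}$ is at most one. Entrywise boundedness does not control the spectral radius: the $N\times N$ all-ones matrix has every entry equal to $1$ yet spectral radius $N$. So this step, as written, fails.

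The paper closes this gap differently. It interprets $\bar{A}$ through the symmetric normalized Laplacian $\bar{L}=I-Q^{-1/2}\bar{A}Q^{-1/2}$ and appeals to the classical bound that the eigenvalues of $\bar{L}$ lie in $[0,2]$ \cite{Mohar_1991_Laplacian}. In the softmax case the rows of $\bar{A}$ sum to one, so $Q=I$, $\bar{L}=I-\bar{A}$, and the Laplacian eigenvalue bound translates directly to $\lambda_i\in[-1,1]$. Combined with $\beta\in(0,1)$ this gives $|(1-\beta)\lambda_i|\le 1-\beta<1$, which is what licenses the geometric-series closed form. For the feature-distance variant of $\bar{A}$ (where softmax is not used) the paper falls back on an empirical observation about the degree matrix to recover the same inequality. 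Either way, the essential input you are missing is a \emph{structural} reason (row-stochasticity, hence Laplacian control) for $|\lambda_i|\le 1$, not merely entrywise bounds.
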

\begin{proof}
	$ \bar{L} = I - Q^{-\frac{1}{2}}\bar{A}Q^{-\frac{1}{2}} $ is the symmetric normalized Laplacian of skeleton graph $ G $, where $ Q = diag([q_1,q_2,\cdots,q_N]), q_i =  \sum_{j=0}^{\bar{A}_{ij}} $ is the degree matrix. Since $ \bar{A} $ is the one-hop attention matrix of $ G $, if $ \bar{A} $ is generated by softmax, then $ q_i = 1 $ and thus $ Q = I $. Therefore, $ \bar{L} = I - \bar{A} $, and the eigenvalue of $ \bar{L} $ is $ \tilde{\lambda}_i = 1 - \lambda_i $. Also, the work \cite{Mohar_1991_Laplacian} proves that the value domain of eigenvalues $ \tilde{\lambda}_i $ of symmetric normalized Laplace matrix  is [0, 2]. Thus, $ -1 \le \lambda_i \le 1 $ and $ \beta \in (0, 1) $, we have $ |(1-\beta)\lambda_i| \le (1-\beta) < 1 $. When $ K \to \infty $, $ ((1-\beta)\lambda_i)^K \to 0 $ and $ \bar{\lambda}_i = lim_{K \to \infty} \sum_{l=0}^{K}\beta(1-\beta)^l{\lambda_i}^l $. Let $ R $ be $ \sum_{l=0}^{K}\beta(1-\beta)^l{\lambda_i}^l $ is calculated as follow
	\begin{equation}
		\begin{split}
			R & = \beta + \beta(1-\beta)\lambda_i + \beta(1-\beta)^2{\lambda_i}^2 + \cdots + \beta(1-\beta)^K{\lambda_i}^K \\
			& = \beta(1 + (1-\beta)\lambda_i + ((1-\beta)\lambda_i)^2 + \cdots + ((1-\beta)\lambda_i)^K) \\
			& = \beta \frac{1 - ((1-\beta)\lambda_i)^K}{1 - (1-\beta)\lambda_i}.
		\end{split}
	\end{equation}
	We get $ \bar{\lambda}_i = lim_{K \to \infty} R = lim_{K \to \infty} \frac{\beta(1 - ((1-\beta)\lambda_i)^K)}{1 - (1-\beta)\lambda_i} = \frac{\beta}{1-(1-\beta)\lambda_i} $. 
\end{proof}

The above proof is under the condition that $ \bar{A} $ is obtained by softmax, but here, we use the feature distance to compute $ \bar{A} $ to maintain semantic consistency with the GPR Graph (Sec. \ref{sec:encoding}). For this purpose, we visualize the degree matrix $ Q $ with value domain of eigenvalue less than 0.3. Thus, the eigenvalue $ \lambda_{Qi} $ of $ Q^{-1/2} $ has a value domain greater than 1, and $ -1 \le \lambda_{Qi}\lambda_{i}\lambda_{Qi} \le 1 $. Since $ \tilde{\lambda}_i \in [0, 2] $, $ | \lambda_{Qi}\lambda_{i}\lambda_{Qi}| \le 1 $. And $ \lambda_{Qi} \ge 1 $, we have $ |\lambda_{i}| \le 1 $. The condition $ |(1-\beta)\lambda_i| \le (1-\beta) < 1 $ still holds.

We further obtain the eigenvalue relations for the normalized laplacian graph of $ \bar{\mathcal{A}} $ and $ \bar{A} $ according to Eq. \ref{eq:eigenvalue_relation}:
\begin{equation}
	\label{eq:laplacian_eigenvalue_relation}
	\begin{split}
		\frac{\bar{\lambda}_{i}^{G}}{\lambda_{i}^{G}} & = \frac{1 - \bar{\lambda}_{i}}{\lambda_{i}^{G}} = \frac{1 - \frac{\beta}{1-(1-\beta)\lambda_i}}{\lambda_{i}^{G}} = \frac{1 - \frac{\beta}{1-(1-\beta)(1 - \lambda_{i}^{G})}}{\lambda_{i}^{G}} \\
		& = \frac{1}{\frac{\beta}{1 - \beta} + \lambda_{i}^{G}}.
	\end{split}
\end{equation}
If $ \lambda_{i}^{G} $ is large, the multi-hop Laplacian eigenvalues $ \bar{\lambda}_{i}^{G} $ less than one-hop $ \lambda_{i}^{G} $. Eq. \ref{eq:laplacian_eigenvalue_relation} shows that multi-hop can add smaller eigenvalues. The smaller the beta, the more pronounced the effect. It has been shown in the spectral analysis that low-frequency signals correspond to topological information of the graph, while higher eigenvalues correspond more to noise \cite{Andrew_2001_SpectralClustering,Gasteiger_2019_Diffusion}. Therefore, multi-hop learned representations can better describe the inter-node relationships.
\begin{figure}[h]
	\centering
	\includegraphics[width=0.85\linewidth]{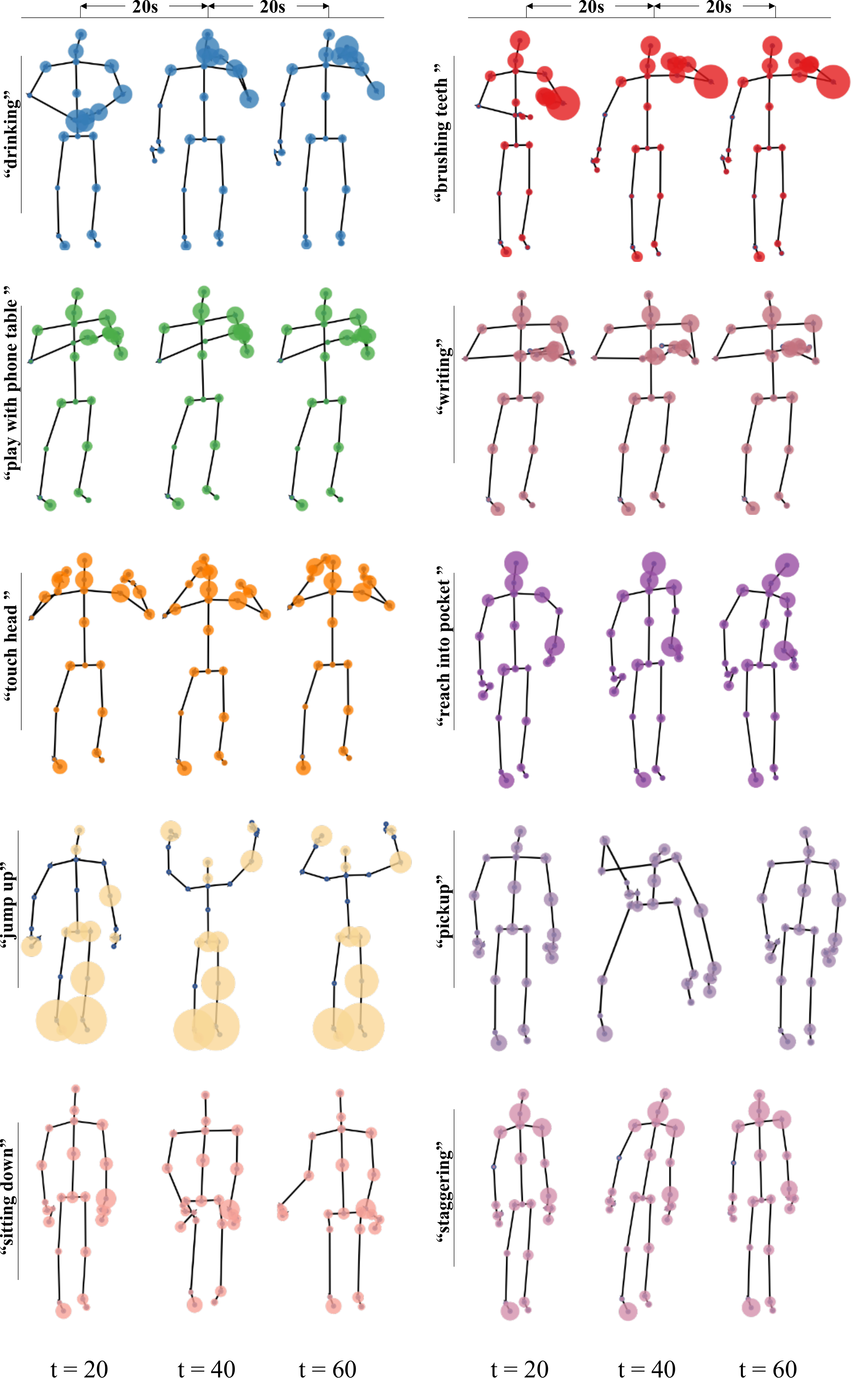}
	\caption{More action visualization for MHA, using different colors for different actions. Each action has a sampling interval of 20 frames. The larger the attention value, the larger the circle's radius centered on the joint.}
	\label{fig:more_mha_vis}
\end{figure}

In addition, we visualize the joint attention of MHA on more actions in Fig. \ref{fig:more_mha_vis}, including ``drinking,'' ``playing with phone table,'' ``touching the head,'' ``writing,'' ``brushing teeth,'' ``reach into pocket,'' ``jump up,'' ``pick up,'' ``sitting down,'' and ``staggering.'' Some actions focus on the upper body, such as the head and hands. Some actions focus on the lower body, such as the hip, knee, and foot. Our MHA can correctly show the strength of the relationship between these critical parts in different actions. The relationship between the left hand and the head was stronger for ``brushing'' than for ``drinking.'' In the two actions of ``writing'' and ``playing with phone tablet,'' MHA paid more vital attention to the "spine" during writing. In addition, MHA can effectively describe the remote interaction between hands and legs in "jump up" and between neck, shoulders, and feet in "staggering."

\section{Supplemental Experimental Results}
\subsection{Ablations on NTU RGB+D 120}
\begin{table}
	\caption{$\lambda$ selection of PC-AC with different text prompts T-C.\label{tab:lambda}}
	\centering
	\scalebox{0.9}{
		\begin{tabular}{c||c||c||c||c}
			\hline
			$\lambda$ &  0.1  & 0.2  & 0.3  & 0.5  \\
			\hline
			p1: [J] function in [C]. &  85.0 & 85.6 & 85.3 & 84.8 \\
			p2: What happens to [J] when a person is [C]? &  85.1 & 85.8 & 85.5 & 85.1 \\
			p3: What will [J] act like when [C]? &  85.3 & \bf 86.1 & 85.6 & 85.3 \\
			p4: When [C][J] of human body. &  85.0 & 85.5 & 85.0 & 84.8 \\
			p5: When [C] what will [J] act like? &  85.2 & 85.7 & 85.2 & 84.9 \\
			p6: When a person is [C], [J] is in motion. &  85.1 & 85.5 & 85.2 & 85.0 \\
			\hline
	\end{tabular}}
\end{table}

\textbf{Weighting Factors for $ L_{aug} $ in Total Loss.} We perform a grid search for the coefficients $ \lambda $ of $ L_{aug} $ and the prompt that generates T-C in PC-AC. As shown in Table \ref{tab:lambda}, $ \lambda=0.2 $ performs best for the class exemplars T-C generated using p3. Prompt p3 searches for possible state features of each node as the action is performed and calculates the correlation of features between nodes in T-C. The design has the same goal as the skeleton-based action recognition task and validates the importance of node correlation modeling for correct classification.

\begin{table}[!t]
	\caption{The PC-AC input and output configuration by Top1 Acc (\%). On the left is the result of selecting the input features $theta_n$. On the right is the pooling method's output logit $  \mathbb{R}^{N \times 1 \times cls \times V} \to \mathbb{R}^{N \times cls} $. \label{tab:input_output_PCAC}}
	\centering
	\begin{tabular}{cc||cc}
		\hline
		$\theta_n$ &  Acc  & Pool & Acc\\
		\hline
		$\theta_1$    &  84.7 & Avg    &  \bf 86.1 \\
		$\theta_2$    &  85.3 & Max    &  85.9 \\
		$\theta_3$    &  \bf 86.1 & Avg $\oplus$ Max &  85.4 \\
		$\theta_2 + \theta_3$   &  84.2 & Weighted sum   &  85.7 \\
		\hline
	\end{tabular}	
\end{table}

\noindent\textbf{Input and Output Configuration of The PC-AC Module.} PC-AC has the best results when the feature $ \theta_3 $ of the last stage of the main branch is selected as an input, and the experiments are shown in Table \ref{tab:input_output_PCAC} left. Our model consists of nine base blocks. These nine basis blocks are divided into three stages based on the feature dimensional descent process. $ \theta_3 $, $ \theta_2 $, and $ \theta_1 $ corresponds to the output features of the ninth, sixth, and third blocks. Except for the input feature selection, we further compared the different pooling methods used to reduce the feature dimensionality and get classification logits. Table \ref{tab:input_output_PCAC} right shows that the average pooling performs the best. The average relationship between joints is more meaningful as the category-by-category representation.

\begin{table}[!t]
	\caption{The accuracy (\%) of the five-stream (5s) and six-stream (6s) ensemble on the NTU RGB+D 120 X-Sub split between the traditional four-stream (4s) and the new ``bone'' representatives. The basis is the joint acc: 86.5\%. The model accuracy (\%) of the new ``bone'' representations \{$ B_{p1}, B_{p2}, B_{p3}, B_{p4}, B_{p5}, B_{p6} $\} on the NTU RGB+D 120 X-Sub split are 85.9, 86.0, 84.9, 85.9, 85.8, and 85.8, respectively.\label{tab:bone_ensemble_120_xsub}}
	\centering	
	\begin{tabular}{c||c||c}
		\toprule
		Modes & Components & Acc \\
		\midrule
		2s & Joint + Bone & $ {89.7}^{\uparrow 3.2} $ \\
		4s & J + B + JM + BM & $ {89.9}^{\uparrow 3.4} $ \\
		\midrule
		\multirow{6}{*}{5s} & 4s + $ B_{p1} $ & $ {90.1}^{\uparrow 3.6} $ \\
		& 4s + $ B_{p2} $ & $ {\bf 90.4}^{\uparrow 3.9} $ \\
		& 4s + $ B_{p3} $ & $ {90.3}^{\uparrow 3.8} $ \\
		& 4s + $ B_{p4} $ & $ {90.1}^{\uparrow 3.6} $ \\
		& 4s + $ B_{p5} $ & $ {\bf 90.4}^{\uparrow 3.9} $ \\
		& 4s + $ B_{p6} $ & $ {90.2}^{\uparrow 3.7} $ \\
		\midrule
		6s					& 4s + $ B_{p2} + B_{p5} $ & $ {\bf 90.7}^{\uparrow 4.2} $ \\
		\bottomrule
	\end{tabular}		
\end{table}

\noindent\textbf{Multi-stream Ensemble.} We integrated the skeletal modal representations generated by different prompts to observe the final performance of the multi-stream integration. We give the classification accuracy of the skeleton representation $ B_{p*} $ generated by each prompt and the ensemble results for 2s, 4s, 5s, and 6s in Table \ref{tab:bone_ensemble_120_xsub}. The 2s in Table \ref{tab:bone_ensemble_120_xsub} represent the integrated joint and bone models, the 4s represent the integrated 2s and their motion modalities, the 5s represent the integrated 4s and arbitrary $ B_{p*} $ modalities, and the 6s represent the integrated 4s and the two best-performing prompts $ B_{p2} $ and $ B_{p5} $ in the 5s. In Table \ref{tab:bone_ensemble_120_xsub}, we can see that the models trained with $ B_{p*} $ have a corresponding improvement in accuracy. Choosing the two best-performing $ B_{p*} $ for integration can lead to higher performance.

\subsection{Multi-stream Ensemble on All Datasets}
\begin{table*}
	\caption{Accuracies (\%) of the six streams ensemble on NTU RGB+D 120 X-Set split, NTU RGB+D 60 X-Sub and X-view split, and NW-UCLA \label{tab:120_xset_60_xsub_xview_nw}}
	\centering
	\begin{tabular}{c||c||c||c||c||c}
		\hline
		\multirow{2}{*}{Modes} & \multirow{2}{*}{Components} & NTU RGB+D 120 & \multicolumn{2}{c||}{NTU RGB+D 60} & NW-UCLA \\
		\cline{3-6}
		& & X-Set & X-Sub & X-view & Top-1 \\
		\hline
		\multirow{4}{*}{1s} & Joint & 88.0 & 90.5 & 95.5 & 95.7 \\
		& Bone  & 88.6 & 90.9 & 95.2 & 93.1 \\
		& $ B_{p2} $ & 87.1 & 89.6 & 93.8 & 90.7 \\
		& $ B_{p5} $ & 87.0 & 89.4 & 93.9 & 92.5 \\
		\hline
		2s & Joint + Bone & 91.0 & 92.3 & 96.6 & 96.3 \\
		4s & J + B + JM + BM & 91.3 & 93.0 & 97.1 & 96.8 \\   
		6s & 4s + $ B_{p2} + B_{p5} $ & \bf 91.8 & \bf 93.5 & \bf 97.2 & \bf 97.6\\
		\hline
	\end{tabular}
\end{table*}

The experimental results of the multi-stream integration of our method on different validation classifications of the three benchmark datasets are shown in Table \ref{tab:120_xset_60_xsub_xview_nw}. All 6s ensembles were done by the best-performing skeleton modalities  $ B_{p2} $ and $ B_{p5} $ on NTU RGB+D 120 X-Sub. It can be observed that the integration performance of the skeleton modalities generated by these two prompts is robust on all datasets.
Our method uses the sum of standard deviations as the criterion for selecting a new skeleton representation and requires that the newly selected ``bone'' be similar to the original skeleton representation. Moreover, the preserved bones are guided by a priori information. Although the model learned from the new skeleton representation does not perform as well as the original bone model, its good performance in ensemble experiments demonstrates that the variability of the new ``bone'' can effectively complement feature learning.

We give all the symbols in Sec. \ref{Sec:method} in Table \ref{tab:notations} to make it more accessible. 

\begin{table*}
	\caption{Summary of symbols in the methods section. \label{tab:notations}}
	\centering
	\scalebox{0.92}{
		\begin{tabular}{c||c||c||c}
			\toprule
			Location & Symbol & Notation Type & Descriptions  \\
			\midrule
			\multirow{6}{*}{Sec. \ref{sec:GPR} GPR Graph} & $ M $ & Constant & The number of action classes \\
			& $ N $ & Constant & The number of human joint nodes \\
			& C $ \in \mathbb{R}^{N \times C} $ & Variable & The LLM output text features for all joint tokens of each action category \\
			& J $ \in \mathbb{R}^{1 \times C} $ & Variable & Node features \\
			& $ C $ & Constant & Feature dimension of J \\
			& $ \text{J}^{CoCLS} \in \mathbb{R}^{N \times C} $ & Variable & Category centroid vector \\
			\midrule
			\multirow{3}{*}{Sec. \ref{sec:multimodalrp} New ``bone'' data} & $ B \in \mathbb{R}^{N \times N} $ & Variable & The bone matrix containing the position relationship between the source and target joints\\
			& $ \widetilde{B} $ & Variable & New ``bone'' matrix \\
			& $ g(\cdot) $      & Function & The selection function of links in $ \widetilde{B} $ \\
			\midrule
			\multirow{12}{*}{Sec. \ref{sec:encoding} MHA-GC} & X & Random Variable & The input skeleton sequence representation \\
			& $ N $ & Constant & Batch size \\
			& $ C $ & Constant & The number of input/output channels \\
			& $ T $ & Constant & Frame length of the sequence X \\
			& $ V $ & Constant & The number of joints \\
			& $ F^l \in \mathbb{R}^{N \times C \times T \times V}$ & Learnable Parameter & Hidden layer representation of GCN \\
			& $ \bar{A}^l $ & Learnable Parameter & First-order attention on all nodes \\
			& $ \tilde{A}^l $ & Learnable Parameter & First-order neighborhood information \\
			& $ \dot{A}^l $ & Learnable Parameter & The shared skeleton topology \\
			& $ \bar{\mathcal{A}}^l $ & Learnable Parameter & Multi-hop attention on all nodes \\
			& $ \gamma $ & Learnable Parameter & The learnable refinement weight of $ \tilde{A}^l $ \\
			& $ \omega $ & Control Parameter & The decay factor of $ \bar{A} $ \\
			\midrule
			\multirow{7}{*}{Sec. \ref{sec:pc-ac} PC-AC} & $ \theta $ & Learnable Parameter & Main branch feature \\
			& $ \theta_n $ & Learnable Parameter & Features shared by the main branch to the auxiliary branch  \\
			& $ f_{\theta}^{pri}(x) $ & Random variable & The main branch prediction \\
			& $ f_{\theta}^{aux}(x) $ & Random variable & The auxiliary branch prediction \\
			& T-C & Graph & Class topology exemplar \\
			& $ L_{aug} $ & Loss & Auxiliary Classification loss \\
			& $ \lambda $ & Control Parameter & Weight parameter of $ L_{aug} $ \\
			\bottomrule
	\end{tabular}}
\end{table*}

}


 
%

%
\bibliographystyle{IEEEtran}
\bibliography{sample-base}

\newpage

%
%
%
%

\vfill

\end{document}